\documentclass{article}

\PassOptionsToPackage{numbers, compress}{natbib}
\usepackage[preprint]{neurips_2026}

\usepackage[utf8]{inputenc}
\usepackage[T1]{fontenc}
\usepackage{url}
\usepackage{booktabs}
\usepackage{multirow}
\usepackage{colortbl}
\usepackage{makecell}

\newcommand{\msd}[2]{\makecell{#1\\[-2pt]{\fontsize{5pt}{5pt}\selectfont$\pm\,$#2}}}
\newcommand{\msdb}[2]{\makecell{\textbf{#1}\\[-2pt]{\fontsize{5pt}{5pt}\selectfont$\pm\,$#2}}}
\usepackage{array}
\usepackage{amsfonts}
\usepackage{amsmath}
\usepackage{amssymb}
\usepackage{amsthm}
\usepackage{mathtools}

\newtheorem{theorem}{Theorem}[section]
\newtheorem{lemma}[theorem]{Lemma}

\newtheorem{proposition}[theorem]{Proposition}
\theoremstyle{definition}

\newtheorem{assumption}[theorem]{Assumption}
\theoremstyle{remark}

\theoremstyle{plain}
\usepackage{nicefrac}
\usepackage{microtype}
\usepackage{xcolor}
\usepackage{graphicx}
\usepackage{float}
\usepackage{tikz}
\usetikzlibrary{arrows.meta, positioning, calc, fit, backgrounds, shapes.misc}
\usepackage{enumitem}
\usepackage{hyperref}
\setlist[enumerate]{leftmargin=1.4em, itemsep=2pt, topsep=3pt, parsep=1pt}
\setlist[itemize]{leftmargin=1.2em, itemsep=2pt, topsep=3pt, parsep=1pt}
\setlist[description]{leftmargin=1.4em, itemsep=2pt, topsep=3pt, parsep=1pt, style=nextline}
\newcommand{\figorplaceholder}[2][\linewidth]{%
  \IfFileExists{#2}%
    {\includegraphics[width=#1]{#2}}%
    {\fbox{\parbox{0.85\linewidth}{\centering\vspace{1.5cm}figure placeholder\vspace{1.5cm}}}}%
}

\newcommand{\bpd}{\mathrm{bpd}}
\newcommand{\RR}{\mathbb{R}}
\newcommand{\EE}{\mathbb{E}}
\newcommand{\Var}{\mathrm{Var}}
\newcommand{\KL}{\mathrm{KL}}
\newcommand{\Cov}{\mathrm{Cov}}
\DeclareMathOperator*{\argmin}{arg\,min}
\newcommand{\lh}{\ell,h}

\newcommand{\kvmse}{KV-COBRA\ensuremath{_{\mathsf{MSE}}}}
\newcommand{\kvkl}{KV-COBRA\ensuremath{_{\mathsf{KL}}}}
\newcommand{\kvc}{KV-COBRA\,w/o\,C2}

\title{KV-COBRA: KV Cache Compression\\
via Co-Optimized Bit-Rank Allocation}

\author{%
  Sihyeon Ha \qquad Jaeho Lee \qquad Yo-Seb Jeon\thanks{Corresponding author} \\[2pt]
  Pohang University of Science and Technology (POSTECH) \\[2pt]
  \texttt{\{sihyeon.ha, jaeho.lee, yoseb.jeon\}@postech.ac.kr}
}

\begin{document}

\maketitle

\begin{abstract}
What limits KV-cache compression at extreme bit-rates?
We argue that it is not the choice of compression scheme, but
how its budget is allocated across attention heads.
Existing methods apply rank and bit-width uniformly,
ignoring that each head has a different optimal mix of
rank truncation and quantization.
We show that \emph{co-optimizing rank and bit-width per
head}---using only standard low-rank projection and scalar
quantization---dominates uniform allocation, with the
largest gains at low bit-rates. Our method, \textbf{KV-COBRA}
(\emph{Co-Optimized Bit-Rank Allocation}), formalizes this
as a resource-allocation problem: it balances
rank-truncation loss against quantization loss within each
head, then redistributes budget across heads to minimize
total distortion. A fused Hadamard rotation equalizes
per-channel variance, and reordering the SVD basis by
attention-KL importance makes the solver query-aware. The
same allocator extends to joint $K{+}V$ compression.
On perplexity, zero-shot, and long-context benchmarks from
$0.5$ to $4$ bits per dimension (bpd), KV-COBRA shows the
smallest accuracy degradation among evaluated methods at
low bpd, with no per-token overhead.
\end{abstract}

\section{Introduction}
\label{sec:intro}

Generating each token with a large language model (LLM) requires loading the entire key-value (KV) cache. Under long-context scenarios, the size of these caches exceeds that of the model weights, rendering the decoding to be \emph{memory-bandwidth-bound}: Loading the KV cache becomes the bottleneck, not computing on it~\citep{pope2022, yuan2024}. To alleviate this issue, two strategies to compress the KV cache have been proposed: \emph{Eviction} discards KV for unimportant or redundant tokens~\citep{h2o, snapkv, streamingllm}. \emph{Per-token compression}, our focus, keeps every
token but stores each at lower \emph{bits per dimension} (bpd), via quantization~\citep{kivi, kvquant, gear}, low-rank projection~\citep{svdq, kqsvd, turboquant}, or both. The strongest per-token compressors combine these, applying top-$r$ SVD truncation followed by $b$-bit quantization on the retained coordinates.

In these per-token compression methods, the rank $r$ and quantization bit-width $b$ are typically set as global hyperparameters, applied uniformly across \textit{all attention heads}. Yet, heads differ dramatically in how their cache spectrum concentrates; some place most of their energy in a handful of directions, while others spread it widely
(Figure~\ref{fig:hero}). Thus, applying a single $r$ is wasteful for concentrated heads and inadequate for diffuse ones. 
Likewise, using a uniform bit-width $b$ across heads may be suboptimal. Such homogeneous allocation may limit accuracy, especially in low-bpd regimes.

This raises two questions. First, \emph{how should the total rate budget be allocated across heads?} Some heads may need higher
precision, while others tolerate aggressive compression. Second,
\emph{how should each head's allotted rate split between $r$ and $b$?} The two questions are coupled, since the optimal
$r$ for a head depends on its $b$ and vice versa, so
optimizing one while holding the other fixed cannot reach
the joint optimum.

We view this as a classical rate-distortion allocation problem~\citep{shannon1959, berger1971}. Each head has a per-head distortion curve---the best achievable reconstruction error at a given rate jointly over
$r$ and $b$---and the global problem is to distribute total rate across heads to minimize the sum of distortions. This is the setting in which information-theoretic water-filling applies~\citep{cover2006}. The resulting first-order conditions yield a per-head rule for splitting the rate between $r$ and $b$ and a marginal-distortion-equalizing schedule across heads, both solvable by a few iterations at calibration time.

We instantiate this view as \textbf{KV-COBRA}
(\emph{Co-Optimized Bit-Rank Allocation}), a two-level
optimizer that runs entirely at calibration time. The inner
level (\textbf{C1}) selects each head's
$(r^\star, b^\star)$ by equating the marginal costs of
truncation and quantization on its eigenvalue spectrum
(Figure~\ref{fig:hero}a). The outer level (\textbf{C2})
redistributes the total budget across heads via
marginal-slope equalization. A Hadamard rotation fused into
the SVD basis~\citep{turboquant, quarot} equalizes the
steeply non-uniform per-channel variance of the retained
coordinates at zero cost (Figure~\ref{fig:hero}c). To make
the objective query-aware rather than purely
reconstruction-driven, we derive a second-order attention-KL
bound that weights each direction by how much the quantizer
hurts it and how much the attention map cares about it;
reordering the basis by this weight before running C1/C2
makes the allocation query-aware. KV-COBRA reuses the same compression kernel as
prior methods; what changes is how the bit-budget is
allocated.

Across LLaMA-3.1-8B, Mistral-7B-v0.3, and
Qwen2.5-7B-Instruct, KV-COBRA matches or beats every prior
SVD- and rotation-based compressor at $0.5$--$2$~bpd on
Wikitext-2/PTB/C4 perplexity, LongBench F1, and
zero-shot reasoning, and the gap widens at the lowest rates
where allocation matters most. The takeaway is one line:
\emph{the bottleneck of low-bit KV compression is allocation
across heads, not the kernel itself}.

\begin{figure}[!t]
\centering
\figorplaceholder[0.95\linewidth]{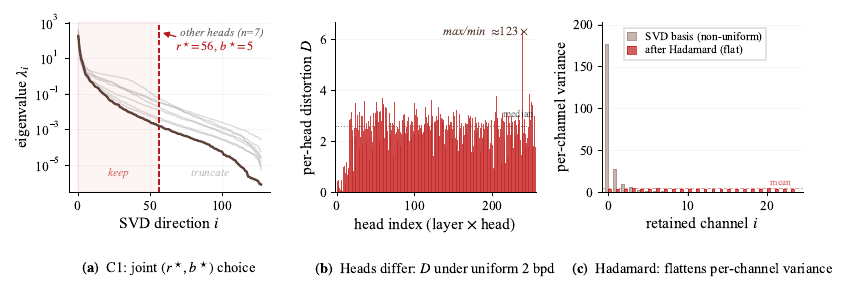}
\caption{\textbf{(a)~Spectrum sets $(r^\star, b^\star)$:}
one head (dark) against the layer's 7 others (grey).
\textbf{(b)~Heads differ dramatically:} per-head $D$ at
uniform 2~bpd spans $100\times$ across 256 KV heads.
\textbf{(c)~Hadamard flattens variance:} per-channel,
retained subspace, before (beige) and after (red).}
\label{fig:hero}
\vspace{-5mm}
\end{figure}

Our contributions can be summarized as follows:
\begin{enumerate}[topsep=0pt,parsep=0pt]
\item \textbf{A joint rank-bit optimizer with optimality
conditions} (Section~\ref{sec:method}). C1 selects each
head's $(r, b)$ optimum, and C2 redistributes the total
budget across heads. Both are gradient-free, run at
calibration time, and add no per-token overhead.

\item \textbf{A query-aware upgrade via attention-KL
reordering} (Section~\ref{sec:kl}). A second-order bound on
attention-KL yields a per-direction importance weight.
Reordering the SVD basis by this weight before C1/C2 makes
the solver optimize attention quality rather than
reconstruction MSE.

\item \textbf{Graceful sub-$1$~bpd compression with no new
kernel} (Section~\ref{sec:experiments}). KV-COBRA stays
accurate down to $0.5$~bpd where uniform-allocation baselines
collapse, and the same allocator extends to joint $K{+}V$
compression (Tables~\ref{tab:kv-joint},~\ref{tab:kv-joint-lb}).
\end{enumerate}

\vspace{-1mm}

\section{Related work}
\label{sec:related}

\paragraph{KV cache quantization.}
Existing KV-cache quantization methods
reduce memory footprint through improved quantization
and reconstruction.
KIVI~\citep{kivi} quantizes $K$ per-channel and $V$ per-token
with grouped bit widths; KVQuant~\citep{kvquant} adds outlier handling and
pre-RoPE~\citep{rope} key quantization;
GEAR~\citep{gear} supplements grouped uniform quantization
with a power-iteration low-rank residual.
AQUA-KV~\citep{aquakv} predicts each layer's cache from the
previous one and quantizes the residual at $2$--$2.5$ bits
per value. All of these methods
operate at a uniform per-head bit budget.

\paragraph{Rotation-based quantization.} A second family first
projects the cache into a basis that concentrates variance
before quantizing: SVDq~\citep{svdq} uses a per-layer SVD basis
with an 8-group mixed-precision schedule; TurboQuant~\citep{turboquant}
applies a random rotation plus per-channel Lloyd--Max
quantization~\citep{lloyd1982, max1960};
QuaRot~\citep{quarot} fuses Hadamard rotations into the weights.
KQ-SVD~\citep{kqsvd} instead
uses a query-aware basis derived from the joint $K$/$Q$
statistics. All fix rank and per-head budget as
hyperparameters. Random-rotation scalar quantization with
distortion guarantees predates KV compression, with
EDEN~\citep{eden} for distributed mean estimation and
RaBitQ~\citep{rabitq} for nearest-neighbor search. KV-COBRA
adopts this standard kernel unchanged. What it adds is the
allocation on top.

\paragraph{Eviction.} A complementary line of work reduces
cache size by discarding entries rather than compressing
them, using importance scores such as accumulated attention
(H2O~\citep{h2o}) or context-reconstruction loss
(KVzip~\citep{kvzip}). These methods operate on the token
axis and compose with per-token compressors.

\paragraph{Architectural KV-cache reduction.}
Grouped-query attention~\citep{gqa}, multi-head latent
attention~\citep{deepseekv2}, cross-layer
attention~\citep{cla}, and dynamic memory
compression~\citep{dmc} share, merge, or compress KV at
training time, whereas KV-COBRA is post-training, drop-in.

\paragraph{Allocation taxonomy.}
Table~\ref{tab:crosswalk} organizes prior work by which
allocation axes each method addresses; to our knowledge no
prior method optimizes more than one. Throughout the paper we
plot the \emph{effective bpd}
$\bpd_{\text{eff}} = \bpd_{\text{nominal}} + \Delta_{\text{method}}$
from the $\Delta$ column (Appendix~\ref{app:effective-bpd}).

\begin{table}[t]
\centering
\caption{\textbf{Allocation taxonomy for KV-cache
compressors.} \checkmark~= optimized, fix = hyperparameter,
eq.\ = equalized by rotation, --- = N/A. $\Delta\bpd$ is the
per-(channel, token) metadata overhead
(Appendix~\ref{app:effective-bpd}).}
\label{tab:crosswalk}
\small
\setlength{\tabcolsep}{4pt}
\begin{tabular}{lcccccc}
\toprule
Method & Basis & layer & head & channel & rank & $\Delta\bpd$ \\
\midrule
KIVI~\citep{kivi}          & identity         & fix & fix & fix & --- & $+1.0$ \\
KVQuant~\citep{kvquant}    & identity         & fix & fix & fix & --- & $+0.16$ \\
GEAR~\citep{gear}          & identity         & fix & fix & fix & --- & $+1.0$ \\
TurboQuant~\citep{turboquant} & rand.\ rotation & fix & fix & eq. & --- & $0$ \\
QuaRot~\citep{quarot}      & Hadamard (fused) & fix & fix & eq. & --- & $0$ \\
SVDq~\citep{svdq}          & per-layer SVD    & fix & fix & fix & fix & $0$ \\
KQ-SVD~\citep{kqsvd}       & query-aware SVD  & $\checkmark$ & fix & --- & adaptive ($\varepsilon$) & $0$ \\
\midrule
\textbf{\kvkl{} (ours)} & per-head SVD + Had. & \textbf{$\checkmark$} & \textbf{$\checkmark$}
                           & eq. & \textbf{$\checkmark$} & $0$ \\
\bottomrule
\end{tabular}
\end{table}

\vspace{-1mm}

\section{KV-COBRA under reconstruction MSE}
\label{sec:method}
\vspace{-1mm}
This section derives the two-level bit-allocation solver under
$L_{2}$ reconstruction MSE (Figure~\ref{fig:pipeline}).
Section~\ref{sec:kl} then upgrades the objective to attention-KL,
making the solver query-aware.

\paragraph{Setup and notation.} We consider a transformer with $L$ layers,
$H$ KV heads with head dimension $d$. Given a context of length $T$, the model requires storing $2LHTd$ values as its KV cache. We denote by
$K_{\lh}$, $V_{\lh}\in\RR^{T\times d}$ the per-(layer, head)
$K$/$V$ matrices, by
$\lambda_{\lh,1}\geq\cdots\geq\lambda_{\lh,d}$ the descending
eigenvalues of their centered covariances, and by
$U_{\lh}\in\RR^{d\times d}$ the corresponding SVD eigenbases.
We drop the side label when the argument applies to both $K$
and $V$. We write $\bpd$ for the average bits per dimension
and $B_{\mathrm{tot}}=LHd\,\bpd$ for the total bit budget.

\paragraph{Distortion model.}
KV-COBRA's Hadamard rotation mixes
the retained SVD coordinates, pushing each channel
toward a Gaussian marginal.
For a zero-mean Gaussian channel with variance $\sigma^{2}$,
a $b$-bit uniform scalar quantizer has MSE
$\frac{\sigma^{2}}{12}2^{-2b} + o(2^{-2b})$
(Bennett~\citep{bennett1948};
Appendix~\ref{app:bennett}).
We adopt the leading-order distortion
\begin{equation}
q(b)\;\coloneqq\;\tfrac{1}{12}\cdot 2^{-2b},
\label{eq:q}
\end{equation}
However, a single fixed transform
does not guarantee Gaussianity, and known bounds
are asymptotic in dimension~\citep{benbasat2026}.
We therefore treat Gaussianity as a working assumption
and validate it empirically at our head dimensions
(Appendix~\ref{app:gaussianity}).
 
Projecting a head onto its top-$r$ SVD directions discards
variance $\sum_{i>r}\lambda_i$
(Eckart--Young~\citep{eckart1936}) and quantizing the $r$
retained coordinates at $\bar b$ bits adds
$q(\bar b)\sum_{i\leq r}\lambda_i$.
At the per-head level the
compressor's effect on generation quality can be modeled by
two natural distortion choices --- the reconstruction MSE in the
SVD basis, or the attention-KL divergence between clean and
compressed softmax distributions --- which both collapse to a
common closed form
\begin{equation}
D(r, \bar b)
\;=\;
\underbrace{\sum_{i>r}w_{i}}_{\text{projection loss}}
\;+\;\underbrace{q(\bar b)\sum_{i\leq r}w_{i}}_{\text{quantization loss}},
\qquad
w_i = \begin{cases}
\lambda_i & \text{(MSE objective)},\\[2pt]
\sigma_{Q,i}^{2}\,\sigma_{K,i}^{2} & \text{(attention-KL objective)},
\end{cases}
\label{eq:drb-preview}
\end{equation}
differing only in whether each direction is weighted by the key
eigenvalue or the query--key variance product (derivation of
the KL form in Section~\ref{sec:kl}). Both weightings plug
into the same two-level C1/C2 solver; this section treats the
MSE case (simpler, query-free), Section~\ref{sec:kl} upgrades
to KL.

\begin{figure}[t!]
\centering
\figorplaceholder[\linewidth]{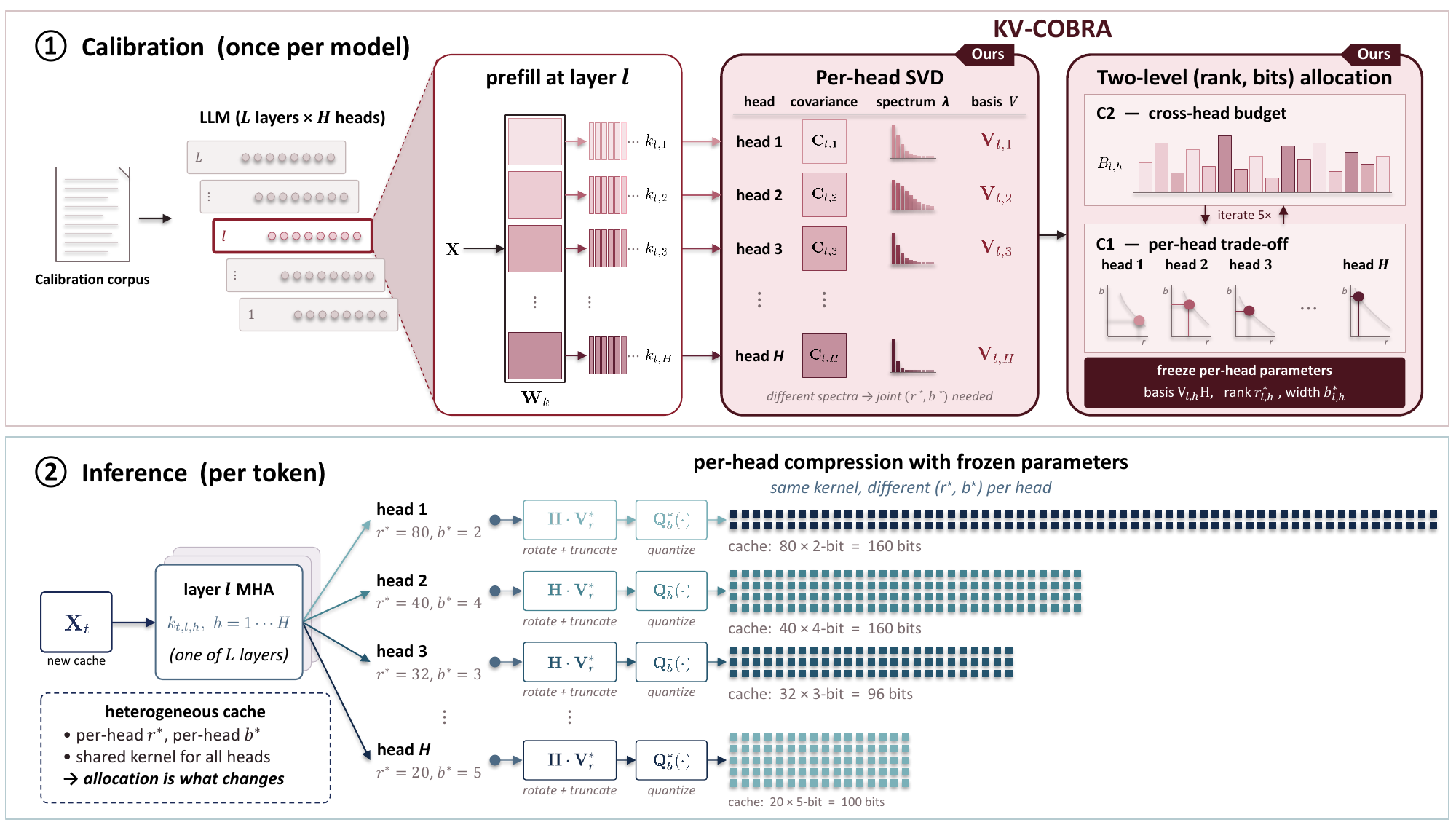}
\vspace{-0.5em}
\caption{KV-COBRA pipeline. \textbf{Calibration.} Per-head
SVD of prefill keys. A two-level C1+C2 allocator freezes
$(V_{\ell,h}, r^\star_h, b^\star_h)$. \textbf{Inference.}
Each key uses $H_{r^\star}\!\cdot\! V_{\ell,h}^\top$ and a
$b^\star_h$-bit quantizer.}
\label{fig:pipeline}
\vspace{-5mm}
\end{figure}

\subsection{Problem formulation}
\label{sec:framing}

KV-COBRA jointly optimizes the retained rank and
bit budget of each head, rather than fixing them as
global hyperparameters.
Figure~\ref{fig:hero} illustrates this trade-off
for LLaMA-3.1-8B.
We use the MSE distortion $D^{\mathrm{MSE}}$
of Eq.~\eqref{eq:drb-preview}; the KL formulation
in Section~\ref{sec:kl} differs only in replacing
$\lambda_{\lh,i}\mapsto w_{\lh,i}$.
The resulting optimization problem is
\begin{equation}
\min_{\{(r_{\lh},\,\bar b_{\lh})\}}
\;\sum_{\ell,h}D_{\lh}\bigl(r_{\lh},\,\bar b_{\lh}\bigr)
\qquad\text{s.t.}\qquad
\sum_{\ell,h}r_{\lh}\,\bar b_{\lh}\leq B_{\mathrm{tot}},
\label{eq:global}
\end{equation}
with $D_{\lh}$ from Eq.~\eqref{eq:drb-preview} (the MSE choice).
Problem~\eqref{eq:global} has a natural two-level decomposition:
\begin{description}
\item[Inner, per head (C1).] Given the head's budget
$B_{\lh}$, jointly choose the retained rank $r_{\lh}$ and
bit width $\bar b_{\lh}=B_{\lh}/r_{\lh}$ that minimize
$D_{\lh}$.
\item[Outer, across heads (C2).] Given the total budget
$B_{\mathrm{tot}}$, distribute it across heads as $\{B_{\lh}\}$
so that the total distortion $\sum_{\lh} D^{\star}_{\lh}(B_{\lh})$
is minimized.
\end{description}

Both levels admit first-order optimality conditions
(Propositions~\ref{thm:c1-kkt},~\ref{thm:c2-kkt}); we solve C1 by
integer enumeration and C2 by a damped iteration that
converges within five rounds on every tested model. The
compression kernel is inherited from the rotate-and-quantize
family~\citep{svdq, turboquant}, so with both levers frozen
KV-COBRA reduces to a basic SVD pipeline.
Sections~\ref{sec:c1}--\ref{sec:c3} derive each level in
turn.

\subsection{Per-head optimum: rank and bit width together}
\label{sec:c1}

Fix a single (layer, head). With a budget $r\cdot\bar b\leq B$,
increasing $r$ reduces the projection loss $\sum_{i>r}\lambda_i$
but spreads the budget thinner; the optimum balances the two.
Substituting $\bar b=B/r$ in Eq.~\eqref{eq:drb-preview} reduces
$D$ to a one-dimensional discrete objective
$f(r)\coloneqq D(r,B/r)$, and on the even-integer rank grid
(step $2$) the forward difference is
\begin{equation}
\Delta f(r) = -(\lambda_{r+1}{+}\lambda_{r+2})
\bigl(1-q\bigl(\tfrac{B}{r+2}\bigr)\bigr)
+\bigl(q\bigl(\tfrac{B}{r+2}\bigr)-q\bigl(\tfrac{B}{r}\bigr)\bigr)
\sum_{i\leq r}\lambda_i,
\label{eq:c1-foc}
\end{equation}
with any integer optimum $r^{\star}$ satisfying
$\Delta f(r^{\star}-2)\leq 0\leq\Delta f(r^{\star})$
(Proposition~\ref{thm:c1-kkt}): the projection-loss gain from
two more directions matches the quantization-loss cost on the
retained ones. We solve by enumerating
$r\in\{2, 4,\dots,\min(d, \lfloor B/b_{\min}\rfloor)\}$,
setting $\bar b=\mathrm{round}(B/r)$ clipped to $[2, 8]$, and
returning the $(r, \bar b)$ with the smallest $D$ --- an
$O(d/2)$ search per head, calibration-only. On typical
calibration runs at $2$~bpd, $r^{\star}$ spans $\sim$$[16, 100]$
and $b^{\star}\in[2, 5]$ across heads
(Fig.~\ref{fig:rstar-div}).

\subsection{Cross-layer redistribution: sum-distortion minimization}
\label{sec:c2}

Heads differ: heavy-tailed ones saturate their budget early,
flat ones stay starved. C2 uses C1 as a black-box oracle
$D_{\lh}^\star(B_{\lh})\coloneqq\min_{rb\leq B_{\lh}}D_{\lh}(r, b)$
and redistributes so that every head sits at the same marginal
distortion:
\begin{equation}
\min_{\{B_{\lh}\}}\sum_{\lh}D_{\lh}^\star(B_{\lh})
\quad\text{s.t.}\quad
\sum_{\lh}B_{\lh} = B_{\mathrm{tot}},\;\;B_{\lh}\geq B_{\min}.
\label{eq:c2}
\end{equation}
Under standard convexity assumptions on $D_{\lh}^\star$
(Appendix~\ref{app:c2}, Assumption~\ref{ass:disc-convex}),
the Lagrangian stationarity reduces to
$(D_{\lh}^\star)'(B_{\lh})=-\nu$, i.e.,\ marginal
distortions equalize at the optimum. We solve~\eqref{eq:c2}
with a damped proportional update on $B_{\lh}$
(Appendix~\ref{app:c2}). An exact DP solver attains the
same PPL within noise but at substantially higher cost
(Appendix~\ref{app:c2-dp}); we adopt the damped update.

\subsection{Per-channel equalization via Hadamard rotation}
\label{sec:c3}

The $r$ retained SVD coordinates have steeply decaying
variances (up to $10^{3}\times$; Figure~\ref{fig:hero}c), so a
per-channel quantizer would need an $L{\times}H$ grid of
step-size schedules. We instead fuse a Hadamard rotation
$V_r\to H_r V_r$, with $H_r$ the $r{\times}r$ Walsh--Hadamard
matrix and a fixed random sign flip drawn once at calibration
and shared across all tokens~\citep{turboquant, quarot}.
Absorbed into the same matrix--vector multiply
(\emph{zero latency}), it equalizes per-channel variance --- the
$10^{3}\times$ spread collapses to $O(1)$, so a single uniform
$\bar b$-bit quantizer suffices --- and pushes each channel
toward Gaussian, the regime where Bennett's model
(Appendix~\ref{app:bennett}) is tightest (validated in
Appendix~\ref{app:gaussianity}).

\begin{table}[ht!]
\caption{KV-COBRA pipeline: one-time calibration (1--6), then per-token inference (7--8).}
\label{alg:kvrobin}
\centering
\small
\setlength{\tabcolsep}{3pt}
\begin{tabular}{@{}rp{0.82\linewidth}@{}}
\toprule
\multicolumn{2}{l}{\textbf{Calibration} (once per model; input: calibration data, target $\bpd$)} \\
\midrule
1 & Per-head $K$ covariance $\to \{\lambda_{\lh,i}\},\{V_{\lh}\}$; query variance $\{\sigma_{Q,\lh,i}^2\}$ in the SVD basis \\
2 & $w_{\lh,i} \leftarrow \sigma_{Q,\lh,i}^2 \cdot \lambda_{\lh,i}$;\; reorder $V_{\lh}$ by $w_{\lh,i}$ descending \hfill\textrm{[\S\ref{sec:kl}]} \\
3 & $B_{\lh} \leftarrow \bpd \cdot d$ for all $(\ell,h)$ \\
4 & \textbf{5 rounds:} $(r^\star_{\lh}, \bar b^\star_{\lh}) \leftarrow \text{C1}(w_{\lh}, B_{\lh})$;\; $B_{\lh} \leftarrow \text{C2-update}$ \hfill\textrm{[\S\ref{sec:c1},\,\S\ref{sec:c2}]} \\
5 & Fuse Hadamard: $V_{\lh} \leftarrow H_{r^\star}\!\cdot\! V_{\lh}[:,\,:r^\star]$ \hfill\textrm{[\S\ref{sec:c3}]} \\
6 & Store $\{V_{\lh},\; r^\star_{\lh},\; \bar b^\star_{\lh}\}$ per head \\
\midrule
\multicolumn{2}{l}{\textbf{Inference} (per token, per head)} \\
\midrule
7 & $\hat{z}_t \leftarrow \mathrm{quantize}(V_{\lh}^\top K_t,\; \bar b^\star_{\lh})$ \hfill\textrm{[project+Hadamard as one multiply]} \\
8 & Cache $\hat{z}_t$;\; dequantize on attention read \\
\bottomrule
\end{tabular}
\end{table}

\section{KV-COBRA under attention-KL}
\label{sec:kl}

The $L_{2}$ formulation of Section~\ref{sec:method} minimizes
reconstruction error in the SVD basis, but downstream quality
depends on what the attention layer \emph{does} with the cache,
not on the cache itself. The $L_{2}$ objective weights every
direction by its $K$-eigenvalue $\lambda_i$, regardless of
whether the query cares about it.
Write the quantized key as $\tilde{K}_t=K_t+E_t$, the clean
attention distribution as $p=\mathrm{softmax}(s)$ with logits
$s_t=\langle Q,K_t\rangle/\sqrt{d}$, and the quantized
distribution as $\tilde{p}=\mathrm{softmax}(s+\Delta s)$.

\begin{proposition}[Attention-KL upper bound]
\label{prop:kl-bound}
Under the high-resolution additive-noise model
($\EE[E_t]{=}0$, channel-wise uncorrelated with
$\Var[E_{t,i}]{=}q(\bar b)\,\sigma_{K,i}^{2}$,
$Q{\perp}E_t$),
\begin{equation}
\EE\bigl[\KL(p\,\Vert\,\tilde{p})\bigr]
\;\underset{(a)}{\leq}\;
\tfrac{1}{2}\,\EE\bigl[\Delta s^{\top}\!F(p)\,\Delta s\bigr]
\;\underset{(b)}{\leq}\;
\tfrac{1}{4}\sum_{t=1}^{T}\Var[\Delta s_t]
\;\underset{(c)}{=}\;
C\!\sum_{i=1}^{r}
\underbrace{\sigma_{Q,i}^{2}\,\sigma_{K,i}^{2}}_{\eqqcolon\;w_i}
q(\bar b),
\label{eq:kl-obj}
\end{equation}
with $C=T/(4d)$. The relevant per-direction quality weight is
therefore $w_i=\sigma_{Q,i}^{2}\sigma_{K,i}^{2}$, not the
$K$-spectrum $\lambda_i$ alone.
\end{proposition}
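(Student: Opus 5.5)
The argument proves the three relations in turn, each by a standard tool: a second-order Taylor expansion of the log-partition function for (a), a spectral bound on the softmax Fisher matrix for (b), and a variance computation under the additive-noise model for (c).

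\emph{Step (a).} Write $A(s)=\log\sum_t e^{s_t}$. Then
\[
\KL(p\,\Vert\,\tilde p)=A(s+\Delta s)-A(s)-\langle \nabla A(s),\Delta s\rangle ,
\]
which is the Bregman divergence of $A$. Since $\nabla A(s)=p$ and $\nabla^2A(s)=F(p)=\mathrm{diag}(p)-pp^\top$, Taylor's theorem with integral remainder gives
\[
\KL(p\,\Vert\,\tilde p)=\tfrac12\,\Delta s^\top F(p_\theta)\,\Delta s
\]
for some intermediate $p_\theta$ on the segment. In the high-resolution regime we replace $F(p_\theta)$ by $F(p)$, taking the remainder to be $O(\|\Delta s\|^3)$, and then take expectations. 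I expect this to be the main obstacle. An honest inequality needs either an explicit third-order remainder bound, or we skip (a) and go directly to (b) using $\sup_\theta\lambda_{\max}(F(p_\theta))$. The second route is cleaner, because the bound in (b) holds uniformly over all probability vectors. I would present (a) as the leading-order term and note that (b) survives the remainder exactly.

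\emph{Step (b).} We need $\Delta s^\top F(p)\Delta s\le \tfrac12\|\Delta s\|^2$. The quadratic form equals $\Var_{t\sim p}(\Delta s_t)$, the variance of a random variable taking the values $\Delta s_t$. This variance is at most $\tfrac14(\max_t\Delta s_t-\min_t\Delta s_t)^2$, and that range term is at most $\tfrac12\|\Delta s\|^2$ since $(a-b)^2\le 2(a^2+b^2)$. Hence
\[
\tfrac12\,\EE[\Delta s^\top F\Delta s]\le\tfrac14\,\EE\|\Delta s\|^2=\tfrac14\sum_t\EE[\Delta s_t^2].
\]
Finally, $\EE[\Delta s_t]=0$ because $\EE[E_t]=0$ and $Q\perp E_t$, so $\EE[\Delta s_t^2]=\Var[\Delta s_t]$.

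\emph{Step (c).} Here $\Delta s_t=\langle Q,E_t\rangle/\sqrt d=\sum_i Q_iE_{t,i}/\sqrt d$, with coordinates taken in the SVD basis and restricted to the $r$ retained directions. By independence of $Q$ and $E_t$ and channel-uncorrelatedness of $E_t$, the cross terms $\EE[Q_iQ_j]\EE[E_{t,i}E_{t,j}]$ vanish for $i\ne j$. This leaves
\[
\Var[\Delta s_t]=\tfrac1d\sum_i\EE[Q_i^2]\,q(\bar b)\,\sigma_{K,i}^2 .
\]
We identify $\EE[Q_i^2]$ with $\sigma_{Q,i}^2$, which is exact for centered queries, or else $\sigma_{Q,i}^2$ is read as the raw second moment. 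The resulting term is the same for every $t$, so summing over $T$ tokens gives $\tfrac{T}{4d}\sum_{i\le r}\sigma_{Q,i}^2\sigma_{K,i}^2\,q(\bar b)$. This yields $C=T/(4d)$ and the weight $w_i=\sigma_{Q,i}^2\sigma_{K,i}^2$.
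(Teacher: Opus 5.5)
Your proposal is correct and follows the paper's own three steps: the second-order expansion of the softmax KL to $\tfrac12\Delta s^{\top}F(p)\,\Delta s$, Popoviciu's bound $F(p)\preceq\tfrac12 I$, and the query--key factorization of $\Var[\Delta s_t]$ under $Q\perp E_t$ with channel-uncorrelated noise. Your treatment of (a) is more careful than the paper's, which only calls it a leading-order proxy: inequality (a) itself can fail (e.g.\ for nearly one-hot $p$), but applying $F(p_\theta)\preceq\tfrac12 I$ at the intermediate point of the Lagrange remainder makes $\EE[\KL(p\,\Vert\,\tilde p)]\le\tfrac14\sum_t\Var[\Delta s_t]$ hold exactly.
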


\begin{enumerate}[label=\textbf{(\alph*)}]
\item \textbf{Taylor expansion of softmax KL}, with
$F(p)=\mathrm{diag}(p)-pp^{\top}$ the softmax Fisher information.
Used as a leading-order proxy; the cubic remainder is signed and
absorbed into our weight ranking, not the absolute KL value.
\item \textbf{Fisher tight bound.}
$F(p)\preceq\tfrac{1}{2}I$ (Popoviciu on unit vectors), giving
the $\tfrac{1}{4}$ leading constant after combining with the
Taylor $\tfrac{1}{2}$.
\item \textbf{Query--key factorization.}
$\Delta s_t=Q\!\cdot\!E_t^{\top}/\sqrt{d}$; assuming
$Q\perp E_t$ and channel-wise uncorrelated $\varepsilon_i$ gives
$\Var[\Delta s_t]=d^{-1}\sum_i\EE[Q_i^2]\,\EE[\varepsilon_i^2]$,
and with $\EE[Q_i^2]\eqqcolon\sigma_{Q,i}^2$ this yields the
per-direction weight
$w_i=\sigma_{Q,i}^{2}\sigma_{K,i}^{2}$. Summing over the
$T$ tokens absorbs a factor of $T$ into $C$.
\end{enumerate}

\paragraph{Reordering and plug-in.}
The weights $w_i$ are \emph{not} monotone in $\lambda_i$: a
direction with small $\lambda_i$ but large $\sigma_{Q,i}^{2}$
would be discarded by an MSE-ordered truncation the query
actually attends to. Before running C1/C2 we reorder the SVD basis by $w_i$
descending (once at calibration, negligible cost), so
``top-$r$'' means the $r$ directions with the largest
attention-KL importance. These
reordered weights replace $\lambda_i$ in $D^{\mathrm{KL}}$ of
Eq.~\eqref{eq:drb-preview}, and the forward difference of
Eq.~\eqref{eq:c1-foc} and the cross-head sum-distortion
envelope of Eq.~\eqref{eq:c2} carry over verbatim. The resulting
method is \kvkl{} (main); the $L_2$ version serves as ablation.
Table~\ref{alg:kvrobin} summarizes. The full C1+C2 calibration
takes under $200$~ms across all three models (Appendix~\ref{app:c2-dp}),
comparable to generating a few dozen tokens at standard FP16 decode rates.

\section{Experimental setup}
\label{sec:setup}
\label{sec:design}

\paragraph{Scope.} We compress $K$ pre-RoPE~\citep{rope};
post-RoPE the per-head covariance is position-dependent and
no static SVD basis captures it (Appendix~\ref{app:postrope}).
The body focuses on $k$-only because the $V$ spectrum is
structurally flatter (Appendix~\ref{app:vside}); joint
$K{+}V$ results follow in
Tables~\ref{tab:kv-joint} and~\ref{tab:kv-joint-lb}.

\paragraph{Models.} Three independently trained
$7$--$8$B grouped-query backbones: LLaMA-3.1-8B~\citep{llama3},
Mistral-7B-v0.3~\citep{mistral}, and
Qwen2.5-7B-Instruct~\citep{qwen25}, all loaded in \texttt{fp16}.

\paragraph{Benchmarks.}
(i)~Perplexity on Wikitext-2~\citep{wikitext}, PTB~\citep{ptb}, and
C4~\citep{c4} (sliding window $2{,}048$, $32$k evaluation tokens
per dataset);
(ii)~zero-shot accuracy on ARC-c~\citep{arc},
HellaSwag~\citep{hellaswag}, PIQA~\citep{piqa},
WinoGrande~\citep{winogrande}, MMLU~\citep{mmlu}
(200 samples per task);
(iii)~LongBench~\citep{longbench} F1 on five tasks
(NarrativeQA~\citep{narrativeqa}, QASPER~\citep{qasper},
MultiFieldQA-en, HotpotQA~\citep{hotpotqa},
MuSiQue~\citep{musique}; 50 samples per task).

\paragraph{Baselines and protocol.} We compare against
KIVI~\citep{kivi}, KVQuant~\citep{kvquant}, GEAR~\citep{gear},
TurboQuant~\citep{turboquant}, SVDq~\citep{svdq}, and
KQ-SVD~\citep{kqsvd}. The pipeline (calibration, compression,
evaluation) is deterministic given the seed; multi-seed cells
(Table~\ref{tab:main-ablation}, Fig.~\ref{fig:main-results})
report mean$\,\pm\,$1\,SD over $5$ seeds. Per-baseline
bit-width quirks, calibration details, and the full bit-rate
scan are in Appendix~\ref{app:setup}.

\section{Results}
\label{sec:experiments}

Per-head $(r^\star, b^\star)$ optima spread across many
operating points and shift with budget and model
(Fig.~\ref{fig:rstar-div}). A single fixed $(r, b)$ cannot
fit all heads, so per-head allocation is the axis on which
the methods below differ. C2 then equalizes per-head distortion across heads.
With uniform per-head budgets a
few heads carry most of the residual error, while the C2
allocation flattens the curve and reduces the cross-head
distortion std by $2$--$8\times$.

\begin{figure}[!ht]
\centering
\figorplaceholder[0.85\linewidth]{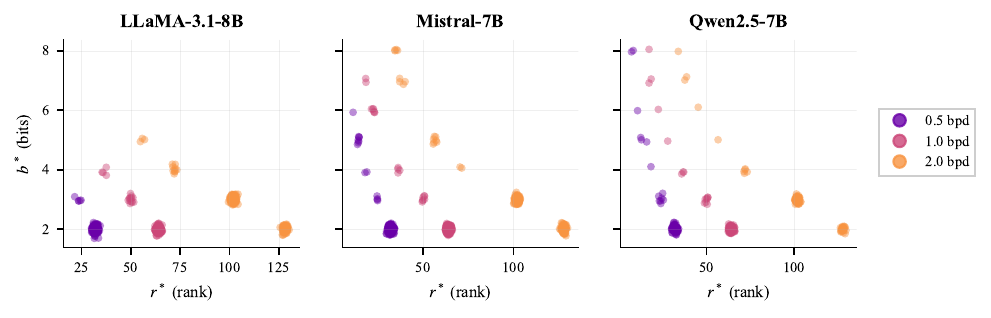}
\caption{Per-head $(r^{\star}, b^{\star})$ selected by C1
across three models, colored by target bpd ($0.5$, $1.0$,
$2.0$).}
\vspace{-1mm}
\label{fig:rstar-div}
\end{figure}

\begin{figure}[H]
\centering
\figorplaceholder[0.85\linewidth]{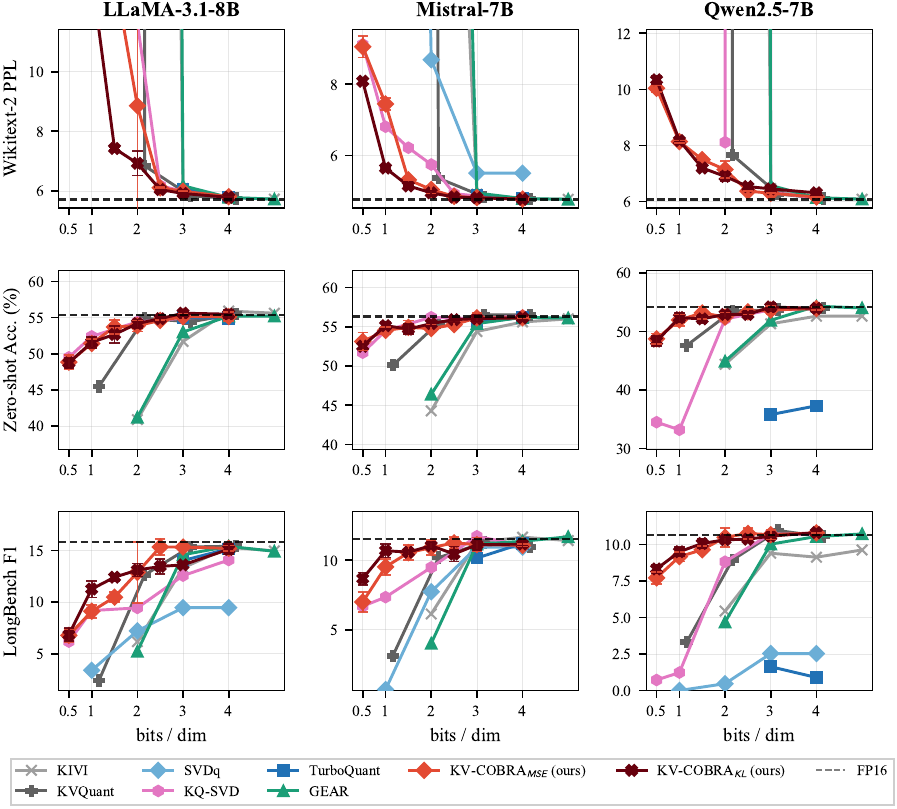}
\caption{Main $k$-only results. Dashed lines mark FP16;
error bars are $\pm 1$~SD over $5$ seeds.}
\label{fig:main-results}
\vspace{-1mm}
\end{figure}

\paragraph{Main comparison.}
Figure~\ref{fig:main-results} sweeps the full $0.5$--$4$~bpd
range across KIVI, KVQuant, GEAR, TurboQuant, SVDq, and
KQ-SVD. At ${\geq}2$~bpd the choice of allocator barely
matters. At low bpd \kvkl{} is the most graceful overall,
holding within range on all three metric averages. KIVI and
GEAR are integer-only and cannot enter the regime, while
SVDq and KQ-SVD degrade sharply: at $1$~bpd \kvkl{} nearly
doubles the LongBench F1 of integer-only baselines.
Table~\ref{tab:main-ablation} zooms in on $0.5$ and
$1.0$~bpd, where PPL is the mean over Wikitext-2, PTB, and
C4 and zero-shot and LongBench are each the mean over five
tasks (multi-seed averages, Appendix~\ref{app:setup}).
\kvmse{} wins on PPL (its direct objective), while \kvkl{}
wins on LongBench and matches \kvmse{} on zero-shot --- the
split predicted by the framing (PPL is
reconstruction-aligned, LongBench attention-aligned).

\begin{table}[H]
\centering
\caption{Metric-group means at $0.5$ and $1.0$~bpd in
$k$-only mode across the three models. PPL is the mean over
Wikitext-2, PTB, and C4 ($5$ seeds). Zero-shot and
LongBench are each the mean over five tasks ($5$ seeds).
\textbf{Bold} marks the better of \kvmse{} and
\kvkl{} per cell. Per-task breakdown is in
Appendix~\ref{app:addl-results}.}
\label{tab:main-ablation}
\footnotesize
\setlength{\tabcolsep}{1.6pt}
\begin{tabular}{l cccccc | cccccc | ccc}
\toprule
 & \multicolumn{6}{c|}{$0.5$~bpd} & \multicolumn{6}{c|}{$1.0$~bpd} & \multicolumn{3}{c}{FP16} \\
 & \multicolumn{2}{c}{LL} & \multicolumn{2}{c}{Mi} & \multicolumn{2}{c|}{Qw} & \multicolumn{2}{c}{LL} & \multicolumn{2}{c}{Mi} & \multicolumn{2}{c|}{Qw} & LL & Mi & Qw \\
\cmidrule(lr){2-3}\cmidrule(lr){4-5}\cmidrule(lr){6-7}\cmidrule(lr){8-9}\cmidrule(lr){10-11}\cmidrule(lr){12-13}\cmidrule(lr){14-16}
Metric & MSE & KL & MSE & KL & MSE & KL & MSE & KL & MSE & KL & MSE & KL & & & \\
\midrule
PPL ($\downarrow$)      & \msdb{31.17}{2.21} & \msd{37.31}{6.44} & \msd{36.50}{1.46} & \msdb{36.24}{0.68} & \msdb{16.33}{0.15} & \msd{17.17}{0.23} & \msd{24.76}{1.14} & \msdb{20.26}{0.87} & \msd{27.18}{0.97} & \msdb{15.90}{0.13} & \msdb{12.97}{0.09} & \msd{13.20}{0.06} & 7.67 & 13.15 & 9.50 \\
Zero-shot ($\uparrow$)  & \msdb{48.8}{1.0} & \msd{48.5}{0.5} & \msdb{53.4}{0.9} & \msd{52.5}{0.3} & \msdb{48.7}{0.9} & \msd{48.5}{0.7} & \msd{51.5}{0.6} & \msdb{51.6}{0.7} & \msd{54.6}{0.5} & \msdb{55.1}{0.5} & \msd{51.7}{0.7} & \msdb{52.5}{0.7} & 55.4 & 56.3 & 54.1 \\
LongBench ($\uparrow$)  & \msdb{6.83}{0.63} & \msd{6.75}{0.58} & \msd{7.02}{0.73} & \msdb{8.73}{0.42} & \msd{7.69}{0.40} & \msdb{8.30}{0.31} & \msd{9.25}{0.54} & \msdb{11.36}{0.77} & \msd{9.55}{0.56} & \msdb{10.72}{0.49} & \msd{9.10}{0.29} & \msdb{9.53}{0.30} & 15.80 & 11.54 & 10.64 \\
\bottomrule
\end{tabular}
\end{table}

\paragraph{Joint K+V compression.}
We hold the total budget $T = K_{\text{bpd}} + V_{\text{bpd}}$
fixed and sweep both sides
(Tables~\ref{tab:kv-joint},~\ref{tab:kv-joint-lb}). The
$V$ side uses a uniform per-head budget because its spectrum
is too flat for cross-head redistribution to help
(Appendix~\ref{app:vside}). On PPL, V-heavy splits dominate
and \kvkl{} takes the per-block optimum on $7$ of $9$
(model, $T$) blocks; conversely, $V_{\text{bpd}}{=}2$
collapses on LLaMA and Mistral across all three methods
(PPL ${>}150$). KQ-SVD's failure on Qwen's
$K_{\text{bpd}}{=}1$ cells (PPL ${\sim}10^{4}$) reflects
its halved KV-head count without per-head allocation.
Table~\ref{tab:kv-joint-lb} reports the same sweep on
LongBench F1.

\begin{table}[H]
\centering
\caption{K+V joint compression on Wikitext-2 PPL
($\downarrow$). Each row scans splits
$(K_{\text{bpd}}, V_{\text{bpd}})$ at fixed total $T$. Both
sides are compressed (V uniform per-head,
Appendix~\ref{app:vside}). The FP16 column is the
uncompressed reference. Shaded cell marks the best PPL
within each (model, $T$) block.}
\label{tab:kv-joint}
\scriptsize
\setlength{\tabcolsep}{2pt}

\begin{tabular}{l l ccc cccc ccccc c}
\toprule
& & \multicolumn{3}{c}{$T = 5$} & \multicolumn{4}{c}{$T = 6$}
& \multicolumn{5}{c}{$T = 7$} & \\
\cmidrule(lr){3-5}\cmidrule(lr){6-9}\cmidrule(lr){10-14}
& Method & $(1,4)$ & $(2,3)$ & $(3,2)$
& $(1,5)$ & $(2,4)$ & $(3,3)$ & $(4,2)$
& $(1,6)$ & $(2,5)$ & $(3,4)$ & $(4,3)$ & $(5,2)$ & FP16 \\
\midrule
\multirow{3}{*}{\textit{LLaMA-3.1-8B}}
& \kvmse{} & 17.35 & 38.84 & 732.1 & 15.79 & 17.88 & 19.17 & 550.4 & 15.21 & 15.48 & 6.69 & 17.26 & 505.7 & \multirow{3}{*}{5.71} \\
& \kvkl{}  & \cellcolor{black!12}14.63 & 24.79 & 490.7 & 13.54 & \cellcolor{black!12}8.86 & 16.55 & 465.1 & 13.35 & 7.74 & \cellcolor{black!12}6.52 & 16.80 & 541.3 & \\
& KQ-SVD   & 26.14 & 50.82 & 528.1 & 15.19 & 15.01 & 27.36 & 559.5 & 13.31 & 12.50 & 6.83 & 25.21 & 551.7 & \\
\midrule
\multirow{3}{*}{\textit{Mistral-7B}}
& \kvmse{} & 7.96 & 8.59 & 285.7 & 7.43 & 5.46 & 7.92 & 326.3 & 7.35 & 5.25 & 4.99 & 7.92 & 311.3 & \multirow{3}{*}{4.77} \\
& \kvkl{}  & \cellcolor{black!12}5.97 & 10.04 & 252.6 & 5.69 & \cellcolor{black!12}5.31 & 8.00 & 305.9 & 5.65 & 5.10 & \cellcolor{black!12}4.98 & 7.89 & 357.1 & \\
& KQ-SVD   & 8.14 & 19.76 & 184.1 & 7.04 & 6.42 & 10.48 & 162.1 & 6.80 & 5.85 & 5.10 & 9.59 & 159.6 & \\
\midrule
\multirow{3}{*}{\textit{Qwen2.5-7B}}
& \kvmse{} & 8.29 & 8.47 & 12.86 & 8.16 & 7.78 & \cellcolor{black!12}6.92 & 11.75 & 8.19 & 7.60 & \cellcolor{black!12}6.43 & 6.78 & 11.57 & \multirow{3}{*}{6.07} \\
& \kvkl{}  & 8.39 & \cellcolor{black!12}7.80 & 16.62 & 8.25 & 7.26 & 7.65 & 13.56 & 8.22 & 7.20 & 7.11 & 7.05 & 12.13 & \\
& KQ-SVD   & 11137. & 10.04 & 19.60 & 10812. & 8.35 & 7.37 & 17.51 & 10932. & 8.12 & 6.47 & 7.21 & 17.25 & \\
\bottomrule
\end{tabular}

\end{table}

\begin{table}[H]
\centering
\caption{K+V joint compression, LongBench F1
($\uparrow$, mean over NarrativeQA, QASPER, MultiFieldQA-en,
HotpotQA, and MuSiQue at $50$ samples per task). Shaded
cell marks the best F1 within each (model, $T$) block.}
\label{tab:kv-joint-lb}
\scriptsize
\setlength{\tabcolsep}{2pt}

\begin{tabular}{l l ccc cccc ccccc c}
\toprule
& & \multicolumn{3}{c}{$T = 5$} & \multicolumn{4}{c}{$T = 6$}
& \multicolumn{5}{c}{$T = 7$} & \\
\cmidrule(lr){3-5}\cmidrule(lr){6-9}\cmidrule(lr){10-14}
& Method & $(1,4)$ & $(2,3)$ & $(3,2)$
& $(1,5)$ & $(2,4)$ & $(3,3)$ & $(4,2)$
& $(1,6)$ & $(2,5)$ & $(3,4)$ & $(4,3)$ & $(5,2)$ & FP16 \\
\midrule
\multirow{3}{*}{\textit{LLaMA-3.1-8B}}
& \kvmse{} & 8.64 & 6.75 & 3.25 & 9.86 & 8.31 & 9.71 & 3.73 & 8.08 & 8.89 & \cellcolor{black!12}13.59 & 8.98 & 3.66 & \multirow{3}{*}{15.80} \\
& \kvkl{}  & \cellcolor{black!12}11.06 & 7.78 & 2.81 & 10.67 & \cellcolor{black!12}11.25 & 8.62 & 3.32 & 12.16 & 12.10 & 12.35 & 8.95 & 3.19 & \\
& KQ-SVD   & 6.21 & 6.63 & 3.59 & 8.23 & 8.14 & 7.02 & 3.42 & 8.78 & 9.23 & 11.55 & 7.66 & 3.38 & \\
\midrule
\multirow{3}{*}{\textit{Mistral-7B}}
& \kvmse{} & 8.42 & 9.07 & 1.60 & 9.71 & 10.25 & 9.13 & 0.95 & 9.35 & 10.80 & \cellcolor{black!12}11.16 & 9.62 & 1.01 & \multirow{3}{*}{11.54} \\
& \kvkl{}  & \cellcolor{black!12}11.13 & 8.41 & 1.95 & 10.04 & \cellcolor{black!12}10.64 & 9.83 & 1.12 & 10.73 & 10.22 & 9.90 & 9.39 & 1.20 & \\
& KQ-SVD   & 6.90 & 5.48 & 1.51 & 8.50 & 9.51 & 8.21 & 1.84 & 8.04 & 10.34 & 10.68 & 9.11 & 2.20 & \\
\midrule
\multirow{3}{*}{\textit{Qwen2.5-7B}}
& \kvmse{} & 9.23 & 9.78 & 8.39 & 8.99 & 9.41 & \cellcolor{black!12}10.35 & 9.24 & 8.89 & 9.36 & 9.82 & 10.16 & 9.07 & \multirow{3}{*}{10.64} \\
& \kvkl{}  & 9.85 & \cellcolor{black!12}10.00 & 8.25 & 8.60 & 10.27 & 10.04 & 8.52 & 9.19 & 10.32 & \cellcolor{black!12}10.48 & 9.84 & 9.35 & \\
& KQ-SVD   & 1.16 & 8.03 & 7.29 & 1.17 & 8.70 & 10.05 & 9.06 & 1.28 & 8.78 & 10.10 & 10.30 & 8.59 & \\
\bottomrule
\end{tabular}
\end{table}

\paragraph{Additional experiments.}
Appendix~\ref{app:addl-results} reports the full per-dataset
PPL and per-task zero-shot/LongBench breakdown
(Tables~\ref{tab:detail-half} and~\ref{tab:detail-one}) and
a per-head attention-primitive wall-clock measurement
showing KV-COBRA is $1.79$--$1.89\times$ faster than FP16-SDPA
at $r{=}16$ across $T\in\{32{\rm k}, 64{\rm k}, 128{\rm k}\}$
(Table~\ref{tab:A4}). Appendix~\ref{app:postrope} covers
pre- versus post-RoPE, and Appendix~\ref{app:ruler} adds
RULER long-context retrieval results.

\section{Discussion and limitations}
\label{sec:discussion}

\paragraph{Allocation is the bottleneck, not the kernel.}
KV-COBRA reuses the same SVD-rotate-and-quantize kernel as
prior work, and every gain we report comes from redirecting
the same bits to better places. The C2 surrogate-optimality
check (Appendix~\ref{app:c2-dp}) reinforces this point.
Replacing our damped proportional update with an exact DP
solver for the sum-distortion minimization reduces
$\sum D^\star$ by $\sim 12$--$16\%$ but actually
\emph{degrades} PPL by $\sim 0.5$ on average. The surrogate
itself is the proxy at fault, not the solver --- which is
exactly why the attention-KL upgrade (Sec.~\ref{sec:kl})
helps where any MSE-side refinement cannot.
Appendix~\ref{app:c2-dp} pins the fault down by replacing the
Bennett estimate with measured per-head distortion, after
which the same exact solver improves on the shipped allocator.
For practitioners,
this means \emph{where} bits are spent --- which head, which
direction --- matters more at low bpd than how they are
quantized.

\paragraph{When does the KL upgrade pay off?}
At ${\geq}2$~bpd $L_2$ and attention-KL agree within noise.
Below $1$~bpd, KL typically improves LongBench F1 even when
MSE wins on PPL (Table~\ref{tab:main-ablation}). KL also takes the per-block
optimum on $7$ of the $9$ (model, $T$) blocks in the K+V
joint sweep (Table~\ref{tab:kv-joint}).

\paragraph{Asymmetric $K$/$V$ budgets.}
$V$-heavy splits dominate the joint $K{+}V$ sweep
(Tables~\ref{tab:kv-joint},~\ref{tab:kv-joint-lb}),
reflecting $V$'s flatter spectrum
(Appendix~\ref{app:vside}). The same $(r^\star, b^\star)$
allocator handles both sides without any $K$-specific
assumption, generalizing the per-head principle to the
asymmetric regime. Appendix~\ref{app:addl-results} breaks the
asymmetry down per task (Table~\ref{tab:kv-pertask}).

\paragraph{Limitations.}
\begin{itemize}[leftmargin=*,itemsep=1pt,topsep=0pt]
\item \textbf{Scope.} Our comparison is restricted to
per-token compressors on the same $(r, b)$ axes. Two
orthogonal families fall outside this scope: token-axis
eviction (H2O~\citep{h2o}, StreamingLLM~\citep{streamingllm}),
which discards entries rather than compressing them, and
entropy coding on top of a quantizer. Both compose with
KV-COBRA in principle, but a fused evaluation is left for
follow-up.
\item \textbf{Wall-clock.} We do not report end-to-end
latency, but Appendix~\ref{app:addl-results} reports
primitive-level attention speedup over FP16-SDPA.
\item \textbf{Calibration stationarity.} $(r^\star, b^\star)$
and budgets are computed once from $32$ Wikitext-2 samples.
The allocation is insensitive to the calibration corpus and
size and transfers across domains
(Appendix~\ref{app:addl-results}), but
refresh-during-generation is not studied.
\item \textbf{Distortion model.} Bennett's leading-order
$q(b)$~\citep{bennett1948} assumes near-Gaussian channels. A
single fixed Hadamard transform carries no distributional
guarantee~\citep{benbasat2026}, so we rely on the empirical
validation of Appendix~\ref{app:gaussianity}. The finite-rate
correction at $b{=}1$ can alter the rank selected by C1
(Appendix~\ref{app:bennett}).
\end{itemize}

\section{Conclusion}

Every rotate-and-quantize KV-cache compressor hides two
allocation decisions --- per-head rank and per-head bit
allocation --- that prior work treats as fixed
hyperparameters.
\textbf{KV-COBRA} (\emph{Co-Optimized Bit-Rank Allocation})
solves them jointly through a two-level decomposition, a
per-head $(r, b)$ optimum (C1) and a cross-head budget
distributor (C2), and adds an attention-KL reordering that
makes both solvers query-aware. The pipeline is gradient-free
and calibration-only, and across LLaMA-3.1-8B, Mistral-7B,
and Qwen2.5-7B it shows the smallest accuracy degradation
among evaluated methods at low bpd. The same allocator
extends to the joint $K{+}V$ regime, where V-heavy splits
typically dominate. The takeaway is
that bit allocation, not the quantization kernel, is where
low-bpd gains live, and stronger rotations or entropy
coding can layer on top without changing the allocator.
Natural extensions include MLA and MoE architectures,
refining the $V$ side, and composing with token-axis
eviction policies.

\bibliographystyle{unsrtnat}
\bibliography{references}

\clearpage
\appendix
\setcounter{figure}{0}
\setcounter{table}{0}
\renewcommand{\thefigure}{A\arabic{figure}}
\renewcommand{\thetable}{A\arabic{table}}
\makeatletter
\renewcommand{\theHfigure}{appendix.\arabic{figure}}
\renewcommand{\theHtable}{appendix.\arabic{table}}
\makeatother

\section{Distortion model and head-level \texorpdfstring{$D(r,b)$}{D(r,b)}}
\label{app:primer}

This section establishes the per-head distortion function
$D(r,b)$ that the main paper uses without proof. We need two
ingredients: (i)~the per-channel quantization MSE
$q(b)\!\cdot\!\sigma^{2}$ (Lemma~\ref{lem:bennett}); and
(ii)~the Eckart--Young identity for the residual MSE of an
orthogonal projection (Lemma~\ref{lem:trunc}).

\subsection{The uniform scalar quantizer after Hadamard rotation}
\label{app:bennett}

KV-COBRA applies a Hadamard rotation to the retained SVD
coordinates before quantization. The rotation equalizes
per-channel variance to the average
$\bar\sigma^2=r^{-1}\sum_{i\leq r}\lambda_i$: with the scaling
$H_r H_r^\top = I$, Walsh--Hadamard rows have squared entries
$1/r$, so for any diagonal covariance
$\mathrm{diag}(\lambda_1,\ldots,\lambda_r)$ every rotated channel
has variance $r^{-1}\sum_j\lambda_j$.
Let $X\sim\mathcal{N}(0,\sigma^{2})$ and let $Q_b$ denote a
uniform mid-tread quantizer at rate $b$ bits, applied to $X$
clipped to a range $[-L\sigma, L\sigma]$ with step
$\Delta = 2L\sigma/2^{b}$ (the loading factor $L$ is
absorbed into $\alpha$ below). Bennett's high-resolution
formula~\citep{bennett1948} then gives
\begin{equation}
\EE\bigl[(X-Q_b(X))^2\bigr]
= \alpha\cdot 2^{-2b}\cdot\sigma^2+o(2^{-2b}),
\label{eq:q-b}
\end{equation}
where $\alpha>0$ depends on the loading factor $L$ (the
standard normalization $\alpha=1/12$ corresponds to
matching the granular term $\Delta^{2}/12$).
Since $D(r,b)$ is the sum of
an $\alpha$-free truncation term and an $\alpha$-proportional
quantization term under the product constraint $rb\leq B$,
the C1 and C2 optimizers only depend on $\alpha$ up to a
$\pm 1$ rank-grid shift that is negligible at $b\gtrsim 2$.
We therefore normalize
\begin{equation}
q(b)\coloneqq \tfrac{1}{12}\cdot 2^{-2b},
\label{eq:q-normalized}
\end{equation}
absorbing $\alpha$ into an overall scale. The Hadamard
rotation produces each channel as a $\pm 1/\sqrt{r}$ weighted
sum of $r$ retained coordinates, the entries of $Y=H_r U_r^\top X$ are uncorrelated and,
for moderately large $r$, approximately
Gaussian~\citep{vershynin2018}. We verify this empirically
(Appendix~\ref{app:gaussianity}): post-rotation channels
track $\mathcal{N}(0,1)$ closely up to the tails.

\begin{lemma}[Per-channel quantization MSE]
\label{lem:bennett}
Let $X\sim\mathcal{N}(0,\sigma^{2})$ be quantized by a uniform
mid-tread quantizer at $b$ bits. Under the normalization
of~\eqref{eq:q-normalized},
$\EE[(X-Q_b(X))^{2}]=q(b)\cdot\sigma^{2}+o(2^{-2b})$.
\end{lemma}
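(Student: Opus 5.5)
Write $X=\sigma Z$ with $Z\sim\mathcal{N}(0,1)$. The quantizer is scale-equivariant: its range $[-L\sigma,L\sigma]$ and step $\Delta=2L\sigma/2^{b}$ both scale with $\sigma$. Hence $X-Q_b(X)=\sigma\,(Z-Q^{(1)}_b(Z))$, where $Q^{(1)}_b$ is the unit-variance quantizer with step $\Delta_1=2L/2^{b}$. It therefore suffices to show
\[
\EE\bigl[(Z-Q^{(1)}_b(Z))^{2}\bigr]=\alpha\,2^{-2b}+o(2^{-2b})
\]
and then renormalize by absorbing $\alpha$ into the scale, as done for~\eqref{eq:q-normalized}. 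Multiplying through by $\sigma^{2}$ then gives the claim; the remainder is $\sigma^{2}o(2^{-2b})$, which is $o(2^{-2b})$ for fixed $\sigma$.

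Split the expectation into a granular part on $\{|Z|\le L\}$ and an overload part on $\{|Z|>L\}$.

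\textbf{Granular part.} On each cell $[c_k-\Delta_1/2,\,c_k+\Delta_1/2]$, let $\phi$ denote the standard normal density and Taylor-expand it about the cell centre: $\phi(z)=\phi(c_k)+O(\Delta_1\sup|\phi'|)$. Integrating $(z-c_k)^{2}$ against this gives
\[
\phi(c_k)\,\Delta_1^{3}/12\,\bigl(1+O(\Delta_1)\bigr).
\]
Summing over the $2^{b}$ cells gives a Riemann sum:
\[
\tfrac{\Delta_1^{2}}{12}\sum_k\phi(c_k)\Delta_1\;\to\;\tfrac{\Delta_1^{2}}{12}\,\Pr(|Z|\le L).
\]
Since $\Delta_1^{2}=4L^{2}2^{-2b}$, this is Bennett's term $\tfrac{\Delta_1^2}{12}$ with the constant $\alpha=\tfrac{L^{2}}{3}\Pr(|Z|\le L)$. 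The error terms are $O(\Delta_1^{3})=o(2^{-2b})$, and the boundary half-cells contribute at the same order.

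\textbf{Overload part.} Here the error is
\[
\EE\bigl[(|Z|-L+O(\Delta_1))^{2}\mathbf 1\{|Z|>L\}\bigr],
\]
which is a constant independent of $b$ when $L$ is held fixed.

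\textbf{Main obstacle.} For fixed $L$ the overload term is not $o(2^{-2b})$. The statement is only asymptotically correct if the loading factor grows with $b$, and the whole proof hinges on choosing that growth correctly. I would take $L=L_b=c\sqrt{b}$ with $c>2\sqrt{\ln 2}$. The Gaussian tail bound then gives
\[
\EE\bigl[(|Z|-L)^{2}\mathbf 1\{|Z|>L\}\bigr]\le 2\phi(L)/L^{3}=O\bigl(e^{-c^{2}b/2}\bigr)=o(2^{-2b}).
\]
The price is that the granular constant now grows like $L_b^{2}/3\sim b$. To keep the normalization of Lemma~\ref{lem:bennett}, I would read $\alpha$ as the loading-dependent constant of~\eqref{eq:q-b}, which the text explicitly absorbs into the overall scale. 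Equivalently, I would state the lemma for a quantizer whose granular step is $\Delta$ with $\Delta^{2}/12$ normalized to $2^{-2b}\sigma^{2}/12$. Under that reading, overload vanishes faster than any $2^{-2b}$ correction and the granular Riemann-sum analysis gives exactly $q(b)\sigma^{2}+o(2^{-2b})$.

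Two details need checking: that the Riemann-sum error stays $o(1)$ uniformly as $L_b$ grows, which follows from $\phi$ being Lipschitz and integrable, and the constant in the tail estimate.
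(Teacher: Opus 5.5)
Your granular computation follows the paper's own argument: cells of width $\Delta$, a Taylor expansion of $\phi$ about the cell centres, summing the $\Delta^2/12$ contributions, and absorbing $\alpha$. You are also right about what that sketch leaves out. For a fixed loading factor $L$, the overload term tends to $\sigma^2\,\EE[(|Z|-L)^2\mathbf{1}\{|Z|>L\}]>0$, so the claimed $o(2^{-2b})$ remainder fails.

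\textbf{The gap is in your repair.} With $L_b=c\sqrt{b}$, your own Riemann-sum computation gives a granular term $\tfrac{c^2 b}{3}\Pr(|Z|\le L_b)\,2^{-2b}\sigma^2\,(1+o(1))$. This is not $\tfrac{1}{12}2^{-2b}\sigma^2+o(2^{-2b})$, because $c^2b/3\to\infty$. The paper can absorb $\alpha$ only because $\alpha$ is a $b$-independent multiplicative constant; its remark that C1/C2 see $\alpha$ only through a rank-grid shift relies on exactly this. A prefactor growing like $b$ changes the \emph{shape} of $b\mapsto q(b)$, which is what the allocator compares, so it cannot be folded into the fixed function $q(b)=\tfrac{1}{12}2^{-2b}$.

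Your ``equivalent'' reading does not rescue this either. $\Delta^2/12=2^{-2b}\sigma^2/12$ means $\Delta=2^{-b}\sigma$, and $2^b$ cells of that width cover only $[-\sigma/2,\sigma/2]$. That is $L=1/2$, which makes the overload a large constant rather than negligible.

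\textbf{No loading schedule works.} For a genuinely fixed-rate (clipped, $2^b$-level) uniform quantizer, no choice of $L_b$ can close this gap, and your threshold $2\sqrt{\ln 2}$ is exactly the critical value.
\begin{itemize}
\item If $L_b\le(2-\epsilon)\sqrt{b\ln 2}$, the overload alone is $\gtrsim\phi(L_b)/L_b^{3}$ for large $L_b$, and $\Omega(1)$ if $L_b$ stays bounded. That is $2^{-(2-\epsilon)^2 b/2}$ up to polynomial factors, hence $\omega(2^{-2b})$.
\item If $L_b\ge(2-\epsilon)\sqrt{b\ln 2}$, the granular term alone is at least about $\tfrac{(2-\epsilon)^2\ln 2}{3}\,b\,2^{-2b}\sigma^2$, and larger still if $\Delta$ does not shrink.
\end{itemize}
The best achievable MSE is therefore $\sim\tfrac{4\ln 2}{3}\,b\,2^{-2b}\sigma^2$, the classical fixed-rate uniform-quantizer asymptotic for a Gaussian. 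Under that reading the lemma is false and cannot be proved.

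\textbf{The hypothesis to adopt instead.} The paper's sketch sidesteps the problem only because it partitions all of $\RR$ into equal cells, i.e.\ it implicitly treats an \emph{unclipped} quantizer. You should make that explicit: take a uniform quantizer with countably many cells and step $\Delta=\sqrt{12\alpha}\,\sigma\,2^{-b}$, with rate $b$ understood through the step size. Then there is no overload region. Your scale reduction and Taylor/Riemann-sum argument, run over all of $\RR$, give $\Delta^2/12+o(\Delta^2)=\alpha\,2^{-2b}\sigma^2+o(2^{-2b})$, and normalizing $\alpha=1/12$ yields the statement. If you want to keep a finite range $[-L\sigma,L\sigma]$ with fixed $L$, you must instead carry the overload constant explicitly in the conclusion.
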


\begin{proof}
Standard Bennett high-resolution
result~\citep{bennett1948, gray1998}; follows from partitioning
$\RR$ into cells of width $\Delta$, Taylor-expanding the
Gaussian density around each cell center, and summing the
leading-order $\Delta^{2}/12$ contributions; absorbing the
$\alpha$ prefactor into~\eqref{eq:q-normalized} gives the
stated form.
\end{proof}

\subsection{Truncation in the SVD basis}

\begin{lemma}[Truncation distortion in the original basis; Eckart--Young~1936~\textnormal{\citep{eckart1936}}, Mirsky~1960~\textnormal{\citep{mirsky1960}}]
\label{lem:trunc}
Let $X$ be any zero-mean random vector in $\RR^{d}$ with finite
second moments, write its covariance as $\Sigma = U\Lambda U^\top$,
and let $U_r\in\RR^{d\times r}$ be the first $r$ columns of $U$.
Define $\hat{X}=U_r U_r^\top X$. Then
\[
\EE\|X-\hat{X}\|^2 = \sum_{i>r}\lambda_i.
\]
\end{lemma}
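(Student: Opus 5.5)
The plan is a direct computation of the residual covariance. I would work in the eigenbasis throughout, using only that $U$ is orthogonal with $U^\top\Sigma U=\Lambda=\mathrm{diag}(\lambda_1,\ldots,\lambda_d)$, eigenvalues sorted in descending order as in the setup.

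First, I would write the residual as $X-\hat X=(I-U_rU_r^\top)X$. Since $U$ is orthogonal, $I=UU^\top=U_rU_r^\top+U_{\perp}U_{\perp}^\top$, where $U_\perp\in\RR^{d\times(d-r)}$ collects the remaining columns. Hence $X-\hat X=U_\perp U_\perp^\top X$, and $P\coloneqq U_\perp U_\perp^\top$ is a symmetric idempotent projector.

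Second, I would take expectations via the trace:
\[
\EE\|X-\hat X\|^2=\EE\bigl[X^\top P^\top P X\bigr]=\EE\bigl[X^\top P X\bigr]=\mathrm{tr}\bigl(P\,\EE[XX^\top]\bigr)=\mathrm{tr}(P\Sigma).
\]
This step uses the zero-mean assumption and finite second moments, so that $\EE[XX^\top]=\Sigma$ exists. Cyclicity then gives $\mathrm{tr}(U_\perp U_\perp^\top U\Lambda U^\top)=\mathrm{tr}(U_\perp^\top U\Lambda U^\top U_\perp)$. Because $U^\top U_\perp=[0;\,I_{d-r}]$, the inner matrix is the lower-right $(d-r)\times(d-r)$ block of $\Lambda$. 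Its trace is $\sum_{i>r}\lambda_i$.

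There is no real obstacle here. The only point needing care is repeated eigenvalues at the cut ($\lambda_r=\lambda_{r+1}$). In that case $U$ is not unique, but the identity still holds for whichever eigenbasis is chosen, since the computation never uses uniqueness. If one also wanted the optimality half of Eckart--Young (that no rank-$r$ projector does better), I would apply Ky Fan's maximum principle to $\mathrm{tr}(U_r^\top\Sigma U_r)$. However, the lemma as stated only asserts the equality, so the trace computation suffices.
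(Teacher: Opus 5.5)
Your proof is correct and takes essentially the same approach as the paper. The paper rotates to $Y=U^\top X$ and uses the pointwise identity $\|X-\hat X\|^2=\sum_{i>r}Y_i^2$ before taking expectations; your trace computation $\EE\|X-\hat X\|^2=\mathrm{tr}(U_\perp^\top\Sigma U_\perp)=\sum_{i>r}\lambda_i$ is the same argument written in matrix form.
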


\begin{proof}
Rotate to the eigenbasis: $Y\coloneqq U^\top X$ has diagonal
covariance $\Lambda$, so $\Var[Y_i]=\lambda_i$ with the $Y_i$
uncorrelated. Projection onto the top-$r$ directions zeros out
$Y_{r+1},\ldots,Y_d$, and $U$ is an isometry, so
$\|X-\hat X\|^2=\sum_{i>r}Y_i^2$. Taking expectations,
$\EE\|X-\hat X\|^2 = \sum_{i>r}\lambda_i$.
\end{proof}

\subsection{Empirical post-Hadamard Gaussianity}
\label{app:gaussianity}

Bennett's formula and the
high-resolution per-channel MSE assumption
(Assumption~\ref{ass:whiten})   require each
post-Hadamard channel marginal to be approximately Gaussian.
A single drawn-once
sign pattern yields no distributional guarantee, and the known
bounds for composed randomized Hadamard transforms vanish only
asymptotically in the dimension~\citep{benbasat2026}, so at our
retained ranks ($r$ as small as $16$) the residual is not
negligible a priori. We therefore
validate this on the calibration set across all
$(\text{layer}, \text{head}, \text{channel})$ triples on three
models: pre-rotation excess kurtosis is heavy-tailed
(median~$\kappa_{\mathrm{pre}}\!\approx\!0.30$),
post-rotation it concentrates sharply around zero
(median~$|\kappa_{\mathrm{post}}|\!<\!0.20$, $95$th
percentile~$<\!1.10$;
Fig.~\ref{fig:kurtosis}). Q--Q plots
(Fig.~\ref{fig:qq}) confirm the
post-rotation channels track $\mathcal{N}(0,1)$ where the
pre-rotation channels show the standard heavy-tailed S-curve.

\begin{figure}[ht!]
\centering
\figorplaceholder[0.75\linewidth]{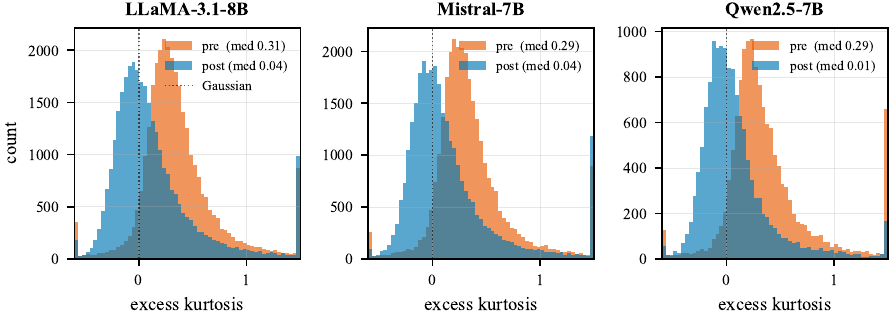}
\caption{Excess kurtosis distribution across all
$(\text{layer},\text{head},\text{channel})$ triples (one panel per
model). Pre-Hadamard (orange) is heavy-tailed
(median~$\kappa\!\approx\!0.30$); post-Hadamard (blue) collapses
near zero (median~$|\kappa|\!<\!0.20$).}
\label{fig:kurtosis}
\end{figure}

\begin{figure}[ht!]
\centering
\figorplaceholder[0.75\linewidth]{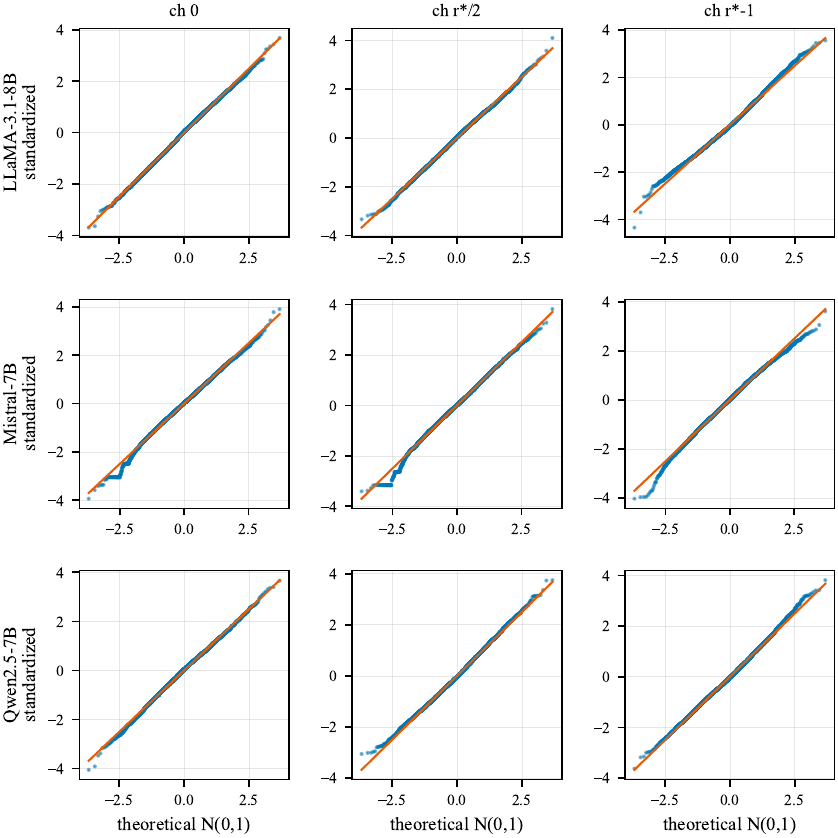}
\caption{Q--Q grid against $\mathcal{N}(0,1)$. Rows: three models;
columns: representative channels (first, mid, last) of one
``median-behavior'' head per model. Post-Hadamard samples track
the diagonal up to $\pm 2.5\sigma$ with mild tail deviation,
consistent with Bennett's high-resolution regime.}
\label{fig:qq}
\end{figure}

\subsection{Putting it together: head-level \texorpdfstring{$D(r,b)$}{D(r,b)}}

The full head-level distortion under the rotate-and-quantize
pipeline (top-$r$ SVD projection followed by per-channel
uniform quantization) is the sum of the two independent
contributions from Lemma~\ref{lem:trunc} (projection residual)
and Lemma~\ref{lem:bennett} (uniform quantizer).

\begin{assumption}[High-resolution per-channel MSE]
\label{ass:whiten}
Let $Y\in\RR^{r}$ be the pre-quantization vector and
$\varepsilon_i\coloneqq Q_b(Y_i)-Y_i$ the per-channel
quantization error. We assume the marginal MSE matches
Bennett's high-resolution formula channel-wise:
$\EE[\varepsilon_i^{2}]=q(b)\cdot\Var(Y_i)+o(q(b))$ for every
$i\in\{1,\ldots,r\}$. This holds exactly as $b\to\infty$ when
$Y_i$ is approximately $\mathcal N(0,\Var(Y_i))$; the
finite-$b$ deviation is absorbed into the $o(q(b))$ remainder.
\end{assumption}

\begin{lemma}[Head-level reconstruction MSE]
\label{lem:head-dist}
Let $X\in\RR^{d}$ be zero-mean with covariance
$\Sigma=U\Lambda U^\top$, eigenvalues
$\lambda_1\geq\cdots\geq\lambda_d\geq 0$. Let $U_r$ be the first
$r$ columns of $U$ and $H_r\in\RR^{r\times r}$ a scaled
Walsh--Hadamard matrix ($H_r H_r^\top = I$). Define the
compressed reconstruction
\[
\hat X\;=\;U_r H_r^\top\,Q_b^{\otimes}\!\bigl(H_r U_r^\top X\bigr),
\]
where $Q_b^{\otimes}$ applies a scalar uniform mid-tread
quantizer at rate $b$ to each of its $r$ inputs. Under
Assumption~\ref{ass:whiten},
\begin{equation}
D(r,b)\;\coloneqq\;\EE\|X-\hat X\|^2
= \sum_{i>r}\lambda_i
\;+\;q(b)\sum_{i\leq r}\lambda_i
\;+\;o\bigl(q(b)\bigr).
\label{eq:head-dist-proved}
\end{equation}
\end{lemma}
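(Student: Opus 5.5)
The plan is to split the error orthogonally into a component outside the retained subspace and a component inside it, and then compute each expectation separately. Write $P_r = U_rU_r^\top$ and $Y = H_rU_r^\top X$. Since $\hat X$ lies in the column span of $U_r$, we have
\[
X-\hat X = (I-P_r)X + \bigl(P_rX-\hat X\bigr),
\]
and the two summands live in orthogonal subspaces. Hence
\[
\|X-\hat X\|^2=\|(I-P_r)X\|^2+\|P_rX-\hat X\|^2
\]
holds pointwise, with no independence argument needed. Taking expectations, the first term equals $\sum_{i>r}\lambda_i$ by Lemma~\ref{lem:trunc}.

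For the second term, I will use $U_r^\top U_r=I$ and $H_r^\top H_r=I$ to write $P_rX = U_rH_r^\top Y$. This gives
\[
P_rX-\hat X = U_rH_r^\top\bigl(Y-Q_b^{\otimes}(Y)\bigr),
\]
and because $U_rH_r^\top$ has orthonormal columns,
\[
\|P_rX-\hat X\|^2=\sum_{i\leq r}\varepsilon_i^2.
\]
Note that this step uses only the per-channel second moments, so correlations among the $\varepsilon_i$ do not matter. Applying Assumption~\ref{ass:whiten} channel-wise then gives
\[
\EE\sum_i\varepsilon_i^2 = q(b)\sum_{i\le r}\Var(Y_i) + r\cdot o(q(b)).
\]
Since $r\le d$ is fixed, the remainder stays $o(q(b))$.

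It remains to identify $\sum_{i\le r}\Var(Y_i)$. The covariance of $Y$ is $H_r\,\mathrm{diag}(\lambda_1,\dots,\lambda_r)\,H_r^\top$, because $U_r^\top\Sigma U_r=\mathrm{diag}(\lambda_{1:r})$. Its trace is invariant under the orthogonal conjugation, so $\sum_i\Var(Y_i)=\sum_{i\le r}\lambda_i$. As a remark, each diagonal entry in fact equals $r^{-1}\sum_{j\le r}\lambda_j$ because the squared Hadamard entries are all $1/r$, but only the trace is needed. Combining the two terms yields~\eqref{eq:head-dist-proved}.

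The main obstacle is not algebraic but lies in where the approximation is placed. Everything except the use of Assumption~\ref{ass:whiten} is exact. One point to handle carefully is that the variance equalization must not be required for the sum to come out right; the trace identity makes the result hold for any orthogonal $H_r$. A second point is the mean: if $X$ is centered but $Y$ has a nonzero mean, Bennett's formula would need adjusting. Since $X$ is zero-mean here, $Y$ is also zero-mean, and the assumption applies directly.
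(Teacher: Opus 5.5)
Your proposal is correct and follows the paper's proof essentially step for step: the same pointwise orthogonal split of $X-\hat X$ into $(I-U_rU_r^\top)X$ and $U_rH_r^\top\bigl(Y-Q_b^{\otimes}(Y)\bigr)$, Lemma~\ref{lem:trunc} for the truncation term, and Assumption~\ref{ass:whiten} applied channel by channel for the quantization term. The only deviation is in computing $\sum_i\Var(Y_i)$: you use trace invariance under orthogonal conjugation, whereas the paper uses the per-entry Hadamard equalization $\Var(Y_i)=\tfrac{1}{r}\sum_{j\le r}\lambda_j$; your shortcut is harmless and shows the formula holds for any orthogonal $H_r$.
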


\begin{proof}
\emph{Step 1 (orthogonal decomposition).}
Write
\[
X-\hat X
=\underbrace{(I-U_rU_r^\top)X}_{\eqqcolon\,\xi_1\ \in\,\mathrm{range}(U_r)^{\perp}}
+\underbrace{U_rH_r^\top\bigl(H_rU_r^\top X-Q_b^{\otimes}(H_rU_r^\top X)\bigr)}_{\eqqcolon\,\xi_2\ \in\,\mathrm{range}(U_r)}.
\]
The two subspaces are orthogonal deterministically, so
$\|X-\hat X\|^2 = \|\xi_1\|^2+\|\xi_2\|^2$ and
$\EE\|X-\hat X\|^2 = \EE\|\xi_1\|^2+\EE\|\xi_2\|^2$.

\emph{Step 2 (truncation term).}
By Lemma~\ref{lem:trunc},
$\EE\|\xi_1\|^2=\sum_{i>r}\lambda_i$.

\emph{Step 3 (quantization term).}
Let $Y\coloneqq H_rU_r^\top X\in\RR^r$ and
$\varepsilon\coloneqq Q_b^{\otimes}(Y)-Y$. Since
$U_rH_r^\top$ has orthonormal columns,
$\|\xi_2\|^2=\|U_rH_r^\top\varepsilon\|^2=\|\varepsilon\|^2
=\sum_{i=1}^{r}\varepsilon_i^{2}$ (Euclidean-norm definition,
deterministic). By linearity of expectation and
Assumption~\ref{ass:whiten},
\[
\EE\|\xi_2\|^2=\sum_{i=1}^{r}\EE[\varepsilon_i^{2}]
=q(b)\sum_{i=1}^{r}\Var(Y_i)+o(q(b)).
\]

\emph{Step 4 (variance equalization under Hadamard).}
Under the normalization $H_r H_r^\top = I$, Hadamard rows have
squared entries $1/r$, so every diagonal entry of $\Cov(Y)=H_r\,\mathrm{diag}(\lambda_1,\ldots,\lambda_r)\,H_r^\top$
equals $\bar\sigma^{2}=\tfrac{1}{r}\sum_{i\leq r}\lambda_i$, and
\[
\sum_{i=1}^{r}\Var(Y_i)=r\bar\sigma^{2}=\sum_{i\leq r}\lambda_i.
\]

\emph{Step 5 (combine).}
Steps 1--4 yield
$\EE\|X-\hat X\|^2
=\sum_{i>r}\lambda_i+q(b)\sum_{i\leq r}\lambda_i+o(q(b))$.
\end{proof}

Eq.~\eqref{eq:head-dist-proved} is the same $D(r,b)$ that the main
paper uses without proof. Everything about KV-COBRA's per-head
decision is an optimization over this quantity.

\section{The per-head optimum (C1)}
\label{app:c1}

With $D(r,b)$ fully justified, we can now state the discrete-integer
optimality conditions for the per-head joint $(r, b)$ optimizer under a
product budget constraint.

\begin{proposition}[Per-head discrete first-order condition]
\label{thm:c1-kkt}
Let $D(r,b)$ be as in Lemma~\ref{lem:head-dist}, and consider
the integer problem
\[
\min_{\substack{r\in\{r_{\min},r_{\min}{+}2,\ldots,d\}\\
b\in\{b_{\min},\ldots,b_{\max}\}}}
\; D(r, b)
\quad\text{s.t.}\quad r\cdot b \leq B.
\]
Substituting $b = B/r$ reduces this to the one-dimensional
discrete objective $f(r)\coloneqq D(r, B/r)$, whose forward
difference is
\begin{multline*}
\Delta f(r) \;\coloneqq\; f(r{+}2)-f(r)
\;=\;-\bigl(\lambda_{r+1}+\lambda_{r+2}\bigr)
\bigl(1-q(B/(r{+}2))\bigr)\\
+\;\bigl(q(B/(r{+}2))-q(B/r)\bigr)\sum_{i\leq r}\lambda_i.
\end{multline*}
Any integer optimum $r^{\star}$ on this grid satisfies the
two-sided first-order condition
\[
\Delta f(r^{\star}-2)\;\leq\;0\;\leq\;\Delta f(r^{\star}),
\]
with the convention $\Delta f(r_{\min}-2)\coloneqq-\infty$ and
$\Delta f(d)\coloneqq+\infty$ at the boundary.
\end{proposition}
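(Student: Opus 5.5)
The plan is to work directly with the leading-order closed form of Lemma~\ref{lem:head-dist}, dropping the $o(q(b))$ remainder, so that
\[
f(r)=\sum_{i>r}\lambda_i+q(B/r)\sum_{i\leq r}\lambda_i .
\]
The proposition has two parts: an algebraic identity for $\Delta f$, and a necessary condition that follows from the definition of a minimum.

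\textbf{Forward-difference identity.} We expand $f(r{+}2)-f(r)$ term by term.

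\begin{itemize}
\item The truncation sums differ by exactly $-(\lambda_{r+1}+\lambda_{r+2})$, since $\sum_{i>r+2}\lambda_i-\sum_{i>r}\lambda_i$ removes those two eigenvalues.
\item For the quantization part, split $\sum_{i\leq r+2}\lambda_i=\sum_{i\leq r}\lambda_i+(\lambda_{r+1}+\lambda_{r+2})$. This gives
\[
q\bigl(B/(r{+}2)\bigr)\sum_{i\leq r+2}\lambda_i-q(B/r)\sum_{i\leq r}\lambda_i
=q\bigl(B/(r{+}2)\bigr)(\lambda_{r+1}+\lambda_{r+2})+\bigl(q(B/(r{+}2))-q(B/r)\bigr)\sum_{i\leq r}\lambda_i .
\]
\item Collecting the $(\lambda_{r+1}+\lambda_{r+2})$ terms yields the factor $-(1-q(B/(r{+}2)))$, which is the stated formula.
\end{itemize}

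Interpretation: the first term is nonpositive because $q\leq 1/12<1$, and the second is nonnegative because $q$ is decreasing. This gives the gain-versus-cost reading, though the sign information is not needed for the proof.

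\textbf{First-order condition.} Let $r^\star$ be a minimizer of $f$ over the feasible even grid.

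\begin{itemize}
\item If $r^\star<d$, then $r^\star{+}2$ is a grid point, so $f(r^\star{+}2)\geq f(r^\star)$, i.e.\ $\Delta f(r^\star)\geq 0$.
\item If $r^\star>r_{\min}$, then $f(r^\star{-}2)\geq f(r^\star)$, i.e.\ $\Delta f(r^\star{-}2)=f(r^\star)-f(r^\star{-}2)\leq 0$.
\item At the endpoints, the conventions $\Delta f(r_{\min}-2)=-\infty$ and $\Delta f(d)=+\infty$ make the corresponding inequality vacuous.
\end{itemize}

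Only pairwise comparisons with neighbours are used, so no convexity of $f$ is needed.

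\textbf{Main obstacle: the reduction to one variable.} The subtle step is justifying the substitution $b=B/r$.

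\begin{itemize}
\item Since $q$ is decreasing and the constraint is $rb\leq B$, for fixed $r$ the best feasible $b$ is the largest admissible one. Ignoring integrality and the clipping $[b_{\min},b_{\max}]$, this is $B/r$, so the budget binds and $\min_b D(r,b)=f(r)$.
\item To keep the proof honest, I would define $f(r)\coloneqq D(r,B/r)$ with $b$ treated as a real-valued rate. The constraint $b\geq b_{\min}$ then restricts $r\leq B/b_{\min}$.
\item That restriction is handled by reading the top of the grid as $\min(d,\lfloor B/b_{\min}\rfloor)$, to which the same boundary convention applies.
\end{itemize}

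With this caveat stated, the proof reduces to the two elementary steps above.
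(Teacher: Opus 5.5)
Your proposal is correct and follows essentially the same route as the paper: the same expansion of $f(r{+}2)-f(r)$ (splitting $\lambda_{r+1}+\lambda_{r+2}$ off the retained sum), then the neighbour comparison that defines a discrete minimum, with the $o(q(b))$ remainder dropped as the paper also does implicitly. Your extra discussion of the $b=B/r$ reduction (integrality, clipping, and the cap $r\le B/b_{\min}$) spells out caveats that the paper leaves unstated; the paper likewise treats $r^\star$ as a minimizer of $f$ rather than of the original integer problem.
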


\begin{proof}
Compute
\begin{align*}
f(r{+}2)-f(r)
&=\Bigl[\sum_{i>r+2}\lambda_i\Bigr]
-\Bigl[\sum_{i>r}\lambda_i\Bigr]
+q\bigl(\tfrac{B}{r{+}2}\bigr)\sum_{i\leq r+2}\lambda_i
-q\bigl(\tfrac{B}{r}\bigr)\sum_{i\leq r}\lambda_i\\
&=-(\lambda_{r+1}{+}\lambda_{r+2})
+q\bigl(\tfrac{B}{r{+}2}\bigr)(\lambda_{r+1}{+}\lambda_{r+2})\\
&\quad+\bigl(q\bigl(\tfrac{B}{r{+}2}\bigr)
-q\bigl(\tfrac{B}{r}\bigr)\bigr)\sum_{i\leq r}\lambda_i,
\end{align*}
which rearranges to the stated $\Delta f(r)$. At an integer
minimizer $r^{\star}$ of $f$ on the grid, the step-2 discrete
first-order condition is
$\Delta f(r^{\star}{-}2)\leq 0\leq\Delta f(r^{\star})$ by
definition of a discrete minimum.
\end{proof}

\paragraph{Solver.}
The rank grid $\{r_{\min},r_{\min}{+}2,\ldots,d\}$ is finite
($\leq 32$ candidates at $d{=}128$ with step $2$), so a
brute-force enumeration that sets $b=\mathrm{round}(B/r)$,
clips to $[b_{\min},b_{\max}]$, and returns $\argmin D(r,b)$
attains the global minimum on the integer feasible set. The
step-$2$ rank grid loses no precision in practice: changing
$r$ by $\pm 1$ shifts $b=B/r$ by at most $\approx 0.25$ bits
in our budget range, and we observed the step-$1$ optimum
coinciding with the step-$2$ sub-grid optimum on the heads
we sampled.

\section{Cross-layer sum-distortion minimization (C2)}
\label{app:c2}

\begin{assumption}[Discrete convexity of the head-level envelope]
\label{ass:disc-convex}
For every (layer, head), the integer sequence
$B\mapsto D^{\star}_{\lh}(B)$ of C1-optimal distortions is
non-increasing and discrete-convex:
$D^{\star}_{\lh}(B+1)-2D^{\star}_{\lh}(B)+D^{\star}_{\lh}(B-1)\geq 0$
for every interior integer $B$.
\end{assumption}

Strict discrete-convexity can locally fail at $(r, b)$ grid
transitions, so we treat
Assumption~\ref{ass:disc-convex} as a working approximation;
the empirical impact on the surrogate optimum is small
(\S\ref{sec:c2-vs-dp}).

\begin{proposition}[Cross-layer sub-gradient KKT under
Assumption~\ref{ass:disc-convex}; multi-channel sum-distortion minimization~\textnormal{\citep{cover2006, boyd2004}}]
\label{thm:c2-kkt}
Under Assumption~\ref{ass:disc-convex}, let
$\widetilde{D}_{\lh}^{\star}:\RR_{\geq B_{\min}}\to\RR_{\geq 0}$
be the piecewise-linear envelope of $D^{\star}_{\lh}$
(linear interpolation between consecutive integer grid points).
$\widetilde{D}^{\star}_{\lh}$ is then convex, non-increasing, and
continuous, with sub-differential
\[
\partial\widetilde{D}^{\star}_{\lh}(B)
= \bigl[D^{\star}_{\lh}(B)-D^{\star}_{\lh}(B-1),\;
D^{\star}_{\lh}(B+1)-D^{\star}_{\lh}(B)\bigr]
\qquad\text{at integer }B
\]
(left and right difference quotients, in the standard order
$f'_-(B)\leq f'_+(B)$).
Consider the relaxation
\begin{equation}
\min_{\{B_{\lh}\}\in\RR_{\geq B_{\min}}^{LH}}\;
\sum_{\lh}\widetilde{D}^\star_{\lh}(B_{\lh})
\quad\text{s.t.}\quad
\sum_{\lh} B_{\lh} = B_{\mathrm{tot}}.
\label{eq:c2-primal}
\end{equation}
At any stationary point there exists $\nu^{\star}\geq 0$ such
that for all heads with $B^{\star}_{\lh}>B_{\min}$,
\begin{equation}
-\nu^{\star}\;\in\;\partial\widetilde{D}^\star_{\lh}(B^{\star}_{\lh}),
\label{eq:c2-stationarity}
\end{equation}
and $B^{\star}_{\lh}=B_{\min}$ for the remaining heads
(complementary slackness). On interior points where
$\widetilde{D}^{\star}_{\lh}$ is differentiable, this reduces to
the classical statement that marginal distortions are equalized.
\end{proposition}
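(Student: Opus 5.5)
The argument has two parts. First we establish the structural properties of the envelope $\widetilde{D}^{\star}_{\lh}$. Then we apply standard Lagrangian duality for a separable convex program with a single linear equality constraint and box lower bounds.

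\emph{Step 1 (envelope properties).} On each interval $[B,B+1]$ with $B$ integer, $\widetilde{D}^{\star}_{\lh}$ is affine with slope $s_B\coloneqq D^{\star}_{\lh}(B+1)-D^{\star}_{\lh}(B)$. Continuity is immediate from the linear interpolation. Monotonicity follows from Assumption~\ref{ass:disc-convex}, which gives $s_B\leq 0$ for every $B$. Convexity is the statement that the slopes are non-decreasing, $s_{B-1}\leq s_B$, and this is exactly the second-difference inequality in Assumption~\ref{ass:disc-convex}. A continuous piecewise-affine function with non-decreasing slopes is convex: it is the pointwise maximum of its affine pieces. For the sub-differential at an integer $B$, the one-sided derivatives of a convex function are $f'_-(B)=s_{B-1}$ and $f'_+(B)=s_B$, and $\partial f(B)=[f'_-(B),f'_+(B)]$. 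At non-integer points the function is differentiable with derivative $s_{\lfloor B\rfloor}$. Nonnegativity of the range is inherited from $D\geq 0$.

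\emph{Step 2 (KKT).} Problem~\eqref{eq:c2-primal} minimizes a sum of proper convex functions over a polyhedron $\{B_{\lh}\geq B_{\min},\ \sum B_{\lh}=B_{\mathrm{tot}}\}$. The constraints are affine, so Slater's condition in its linear-constraint (refined) form holds whenever the feasible set is nonempty, i.e.\ whenever $B_{\mathrm{tot}}\geq LHB_{\min}$. Strong duality and the KKT conditions are therefore necessary and sufficient~\citep{boyd2004}. Each summand is finite and convex on an open neighbourhood of the feasible region's relevant part, or we extend it by $+\infty$ below $B_{\min}$. With the constraints folded in this way, the subdifferential sum rule applies, since the sum is over separate coordinates and is therefore exact. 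Introducing a multiplier $\nu$ for the equality constraint and $\mu_{\lh}\geq0$ for the bounds, stationarity reads
\[
0\in\partial\widetilde{D}^{\star}_{\lh}(B^{\star}_{\lh})+\nu-\mu_{\lh},\qquad
\mu_{\lh}(B^{\star}_{\lh}-B_{\min})=0 .
\]
For heads with $B^{\star}_{\lh}>B_{\min}$ we have $\mu_{\lh}=0$, which gives~\eqref{eq:c2-stationarity}. Heads with $\mu_{\lh}>0$ sit at $B_{\min}$, which is the complementary-slackness clause. Because the problem is convex, every stationary point is a global minimizer, so ``at any stationary point'' and ``at the optimum'' coincide.

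\emph{Step 3 (sign of $\nu^\star$ and the differentiable case).} Suppose some interior head exists. Then $-\nu^{\star}$ lies in the sub-differential of a non-increasing function, and every element of that sub-differential is $\leq 0$ by Step 1, so $\nu^{\star}\geq 0$. If every head sits at $B_{\min}$, then $\nu$ is only constrained through $\mu_{\lh}\geq0$, and we can choose it nonnegative: take $\nu^{\star}=\max(0,\max_{\lh}(-s^{\lh}_{B_{\min}}))$ and set $\mu_{\lh}$ accordingly. Where $\widetilde{D}^{\star}_{\lh}$ is differentiable, the sub-differential is a singleton, so $(\widetilde{D}^{\star}_{\lh})'(B^{\star}_{\lh})=-\nu^{\star}$ for all interior heads, which is marginal-distortion equalization.

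The main obstacle is the boundary bookkeeping in Step 2. At $B=B_{\min}$ the left difference $D^{\star}(B_{\min}-1)$ is undefined, and the domain is a closed half-line. I would handle this by defining $\widetilde{D}^{\star}_{\lh}=+\infty$ below $B_{\min}$. The bound constraint is then absorbed into the function's normal cone $(-\infty,0]$ at $B_{\min}$, and the $\mu_{\lh}$ term above is exactly that normal cone. This makes the sub-differential formula in the statement apply only at interior integers, which is all that is claimed.
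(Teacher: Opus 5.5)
Your proposal is correct and follows the same route as the paper. You establish that the interpolated envelope is convex with the stated one-sided-difference subdifferential, form the Lagrangian with a multiplier $\nu$ for the budget equality and nonnegative multipliers for the lower bounds, and read off $-\nu^{\star}\in\partial\widetilde{D}^{\star}_{\lh}(B^{\star}_{\lh})$ on heads whose bound multiplier vanishes. You go beyond the paper by deriving convexity and the subdifferential from the second-difference inequality and by justifying $\nu^{\star}\geq 0$, which the paper only asserts. One caveat: the refined Slater condition needs a feasible point in the relative interior of the objective's domain. It therefore covers all $B_{\mathrm{tot}}\geq LH\,B_{\min}$ only if you extend $\widetilde{D}^{\star}_{\lh}$ linearly below $B_{\min}$ rather than by $+\infty$; with the $+\infty$ extension the edge case $B_{\mathrm{tot}}=LH\,B_{\min}$ is vacuous anyway.
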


\begin{proof}
Under Assumption~\ref{ass:disc-convex} the piecewise-linear
envelope $\widetilde{D}^{\star}_{\lh}$ is convex and continuous
with the stated sub-gradients, so~\eqref{eq:c2-primal} is the
minimization of a sum of closed convex functions subject to one
affine equality and $LH$ affine inequalities. Slater's condition
is trivially satisfied by
$B_{\lh}=B_{\mathrm{tot}}/(LH)\geq B_{\min}$ whenever
$B_{\mathrm{tot}}$ is large enough \citep[Section~5.2.3]{boyd2004}.
The associated Lagrangian is
\[
\mathcal{L}(\{B_{\lh}\};\nu,\{\rho_{\lh}\})
= \sum_{\lh}\widetilde{D}^\star_{\lh}(B_{\lh})
+ \nu\cdot\Bigl(\sum_{\lh}B_{\lh}-B_{\mathrm{tot}}\Bigr)
- \sum_{\lh}\rho_{\lh}\cdot(B_{\lh}-B_{\min}),
\quad \rho_{\lh}\geq 0,
\]
and the sub-gradient optimality conditions at a primal--dual saddle point
are (a) primal feasibility, (b) $\rho_{\lh}\geq 0$, (c)
$\rho_{\lh}(B^{\star}_{\lh}-B_{\min})=0$, and (d)
$0\in\partial\widetilde{D}^\star_{\lh}(B^{\star}_{\lh})+\nu^{\star}-\rho_{\lh}$.
Specializing (d) to $\rho_{\lh}=0$ gives Eq.~\eqref{eq:c2-stationarity}.
\end{proof}

\paragraph{Damped proportional update.}
Eq.~\eqref{eq:c2-stationarity} characterizes the optimum but
does not specify a solver. We use a heuristic that targets a
simpler condition, value equalization
$D^\star_{\lh}=\bar D$ with
$\bar D=\tfrac{1}{LH}\sum_{\lh}D^\star_{\lh}$. This is a
first-order proxy for the KKT condition near uniform: under
$D\propto B^{-\alpha}$, marginal equalization
$\partial D/\partial B = -\nu^{\star}$ becomes
$\alpha D_{\lh}/B_{\lh}=\nu^{\star}$, which reduces to value
equalization $D_{\lh}=\nu^{\star}\bar B/\alpha$ exactly when
all $B_{\lh}\!\approx\!\bar B$. Damping the iteration keeps
budgets close enough to $\bar B$ for this approximation to
remain accurate. Under the empirical scaling $\alpha\!\approx\!2$
(the $2^{-2b}$ falloff of uniform quantization),
heads above the mean require a budget boost roughly
proportional to $\sqrt{D^\star_{\lh}/\bar D}$. Damping by
$\eta=0.3$ around the mean budget $\bar B$ gives the update
\begin{equation}
B_{\lh}^{(t+1)}=\max\!\Big(B_{\min},\;B_{\lh}^{(t)}
+\eta\,\bar B\bigl(\sqrt{D^\star_{\lh}/\bar D}-1\bigr)\Big),
\label{eq:c2-update}
\end{equation}
followed by total-budget renormalization
$B_{\lh}\leftarrow B_{\lh}\cdot B_{\mathrm{tot}}/\sum B_{\lh}$.
At the fixed point $D^\star_{\lh}=\bar D$ for all heads (zero
update at equality), the sign is always correct, and the
per-iteration cost is one C1 call per head ($O(LH)$). On every
tested model the iteration reaches $<\!1\%$ per-head budget
change within five rounds, which we use as the default. The
resulting per-head budgets are non-flat across all three
models (Figure~\ref{fig:xlayer-budget}), confirming that
cross-head heterogeneity is worth exploiting.

\begin{figure}[t]
\centering
\figorplaceholder[0.96\linewidth]{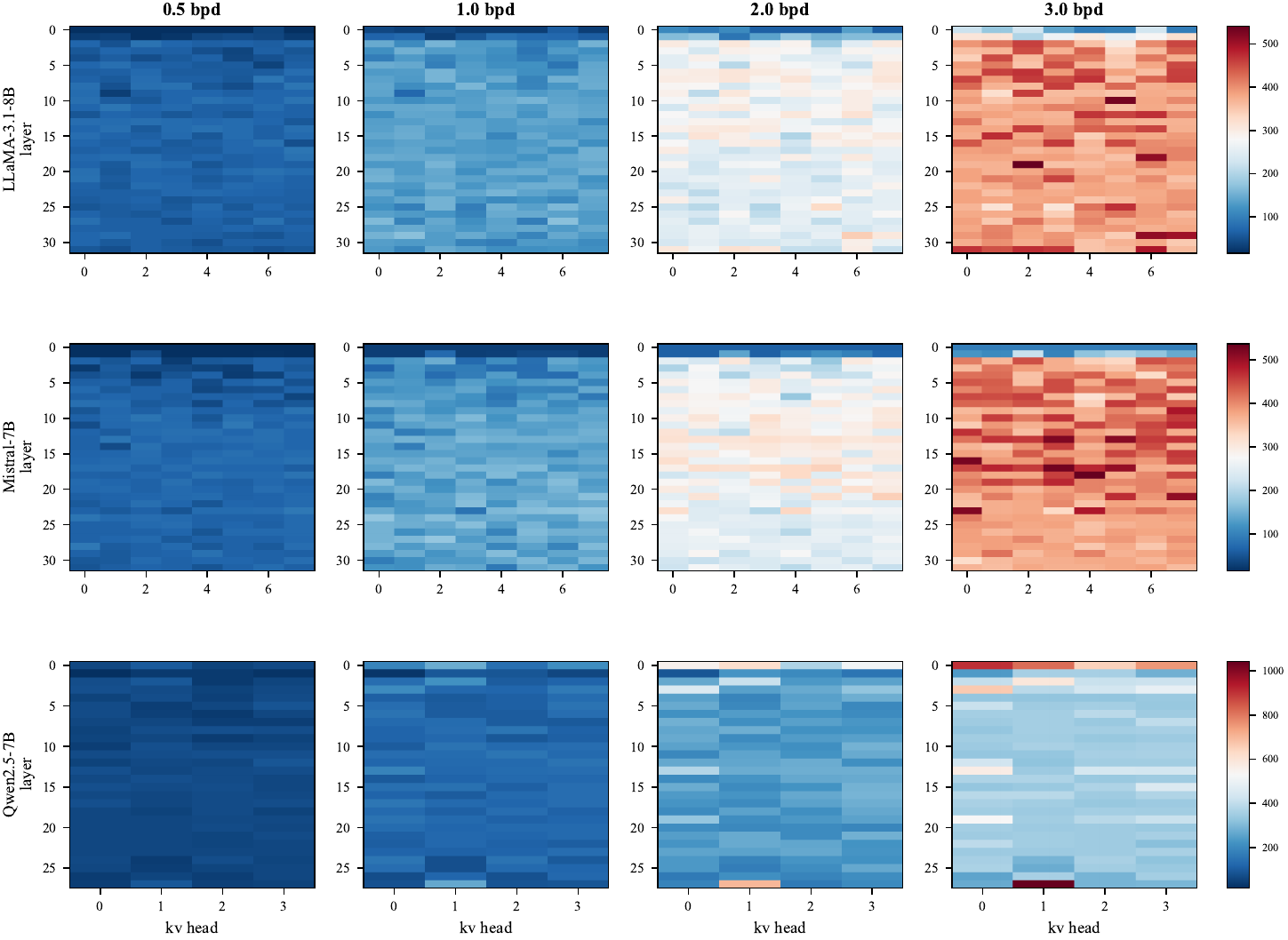}
\caption{C2 budget allocation $B^{\star}_{\lh}$ on $K$ for
LLaMA-3.1-8B, Mistral-7B, Qwen2.5-7B at four target bpds.
Cell color is the per-head bit budget (red high, blue low;
scale normalized per panel). Non-flat pattern $\Rightarrow$
heterogeneity worth exploiting across all three models.}
\label{fig:xlayer-budget}
\end{figure}

\clearpage

\section{C2 solver choice: proportional update vs.\ exact DP}
\label{app:c2-dp}
\label{sec:c2-vs-dp}

The solver used throughout the paper is the damped
proportional update of~\eqref{eq:c2-update} (\S\ref{app:c2}). It
is not surrogate-optimal: it compares $D^\star_{\lh}$ to the
mean $\bar D$ rather than to a subgradient, and the integer
constraint is enforced only by the final rounding. A natural
question is whether a more careful solver would measurably
improve PPL.

\paragraph{Alternative solver: exact DP.}
The integer problem
\[
\min_{\{B_{\lh}\}\in\mathbb{Z}^{LH}}\sum_{\lh}D^\star_{\lh}(B_{\lh})
\quad\text{s.t.}\quad\sum_{\lh}B_{\lh}=B_{\mathrm{tot}},\;
B_{\lh}\in[B_{\min},B_{\max}]
\]
is a separable knapsack and admits an exact
$O(LH\cdot B_{\mathrm{tot}}\cdot C)$ dynamic-programming
solver, where $C=(B_{\max}{-}B_{\min})/\Delta$ is the per-head
candidate count at step $\Delta$. This DP makes \emph{no}
convexity assumption on $D^\star_{\lh}$: it finds the
surrogate-optimal integer allocation even when the head-level
envelope has plateau-then-jump patterns that
Assumption~\ref{ass:disc-convex} rules out.

\paragraph{PPL on real models.}
Table~\ref{tab:c2-dp} reports Wikitext-2 PPL on LLaMA-3.1-8B,
Mistral-7B, and Qwen2.5-7B ($k$-only) for our solver and DP,
with C1 identical across columns.

\begin{table}[H]
\centering
\footnotesize
\caption{C2 solver sensitivity on Wikitext-2 PPL ($k$-only,
single seed). \textbf{Ours} is the damped proportional update
used in the paper. \textbf{DP-optimal} solves the integer
surrogate exactly without any convexity assumption. PPL: cells
where one method beats the other are bolded. Rightmost column:
wall-clock ratio of the C2 step
(${t_{\mathrm{DP}}}/{t_{\mathrm{ours}}}$, single-CPU-thread,
calibration excluded). At $1.0$ bpd, our solver costs
$130$\,ms on LLaMA / Mistral ($256$ heads each) and $58$\,ms
on Qwen ($112$ heads); DP costs $5.3$\,s and $2.3$\,s. The
ratio grows with bpd because DP's table size is
$O(LH\cdot\bar B)$ while our iteration count stays at $5$.}
\label{tab:c2-dp}
\begin{tabular}{l c c c c}
\toprule
Model & bpd & PPL ours & PPL DP-optimal & $t_{\mathrm{DP}}/t_{\mathrm{ours}}$ \\
\midrule
LLaMA-3.1-8B & 0.5 & \textbf{15.56} & 17.02 & 25$\times$ \\
             & 1.0 & \textbf{15.55} & 15.71 & 40$\times$ \\
             & 2.0 &  \textbf{6.78} &  7.38 & 49$\times$ \\
             & 3.0 &  5.98 & \textbf{5.93} & 50$\times$ \\
             & 4.0 &  5.80 & \textbf{5.76} & 51$\times$ \\
\midrule
Mistral-7B   & 0.5 &  7.57 & \textbf{7.32} & 26$\times$ \\
             & 1.0 &  6.16 & \textbf{6.08} & 40$\times$ \\
             & 2.0 & \textbf{5.02} &  5.17 & 51$\times$ \\
             & 3.0 &  \textbf{4.84} &  4.84 & 51$\times$ \\
             & 4.0 &  4.79 & \textbf{4.78} & 50$\times$ \\
\midrule
Qwen2.5-7B   & 0.5 & 12.30 & \textbf{11.92} & 25$\times$ \\
             & 1.0 &  9.63 & \textbf{9.28} & 39$\times$ \\
             & 2.0 & \textbf{6.97} &  8.16 & 49$\times$ \\
             & 3.0 & \textbf{6.35} &  6.36 & 47$\times$ \\
             & 4.0 & \textbf{6.17} &  6.18 & 47$\times$ \\
\bottomrule
\end{tabular}
\end{table}

\paragraph{Takeaway.}
Across the $15$ cells in Table~\ref{tab:c2-dp}, the two solvers
give comparable PPL: the median per-cell gap is $0.01$, the
mean is $0.52$, and individual gaps fall within the
$\pm 0.1$--$0.5$ multi-seed noise.
Wall-clock differs much more: $25$--$51\times$ across the bpd
range (one-time calibration cost). With no measurable PPL gain
from DP we adopt the cheaper solver as the default; the DP
solver ships alongside in the reference code and can be
selected with a single flag.

\paragraph{Where the surrogate deviates.}
A natural follow-up is why surrogate optimality does not
translate into PPL. To find out, we quantized every
(layer, head)'s rotated calibration activations with the
deployed quantizer at $b=1,\ldots,4$ and compared the measured
MSE against the $q(b)\propto 2^{-2b}$ model, normalized at
$b{=}4$ so that only the shape matters
(Table~\ref{tab:c2-shape}). The model is accurate at
$b\geq 3$ but underestimates the true distortion by
$1.7\times$ at $b{=}2$, with a $30$--$40\%$ head-to-head
spread. An allocator compares distortion across heads and bit
widths, so what hurts is error that differs across the options
being compared. An exact solver tends to pick the options
whose cost is most underestimated.

\begin{table}[H]
\centering
\footnotesize
\caption{Measured quantizer MSE divided by the $2^{-2b}$
model (shape only, normalized at $b{=}4$, all three models).
$b{=}1$ is below the deployed $b\geq 2$ floor and is shown for
the trend.}
\label{tab:c2-shape}
\begin{tabular}{l cccc}
\toprule
 & $b=4$ & $b=3$ & $b=2$ & $b=1$ \\
\midrule
median across heads & 1.0 & 1.1 & \textbf{1.7} & 6.9 \\
per-head spread (p95$-$p05)/median & 31\,\% & 33\,\% & \textbf{41\,\%} & 36\,\% \\
\bottomrule
\end{tabular}
\end{table}

\paragraph{Exact DP over measured distortion.}
Replacing the Bennett estimate with measured per-head
distortion tables (quantizing the calibration activations once
per candidate $(r, b)$ adds a few seconds to calibration) and
re-running the same exact DP removes the degradation entirely
(Table~\ref{tab:c2-measured}). The exact solver then improves
on the shipped allocator on all three models, and the solution
becomes nearly seed-invariant. The degradation therefore comes
from the error of the assumed quantizer model, not from the
solver or the objective. Once the estimate is measured instead
of assumed, the same objective and the same solver behave
well.

\begin{table}[H]
\centering
\footnotesize
\caption{Wikitext-2 PPL at $1$~bpd, $k$-only
(mean$\,\pm\,$SD over $3$ seeds).}
\label{tab:c2-measured}
\begin{tabular}{l ccc}
\toprule
 & LLaMA & Mistral & Qwen \\
\midrule
FP16 & 5.71 & 4.77 & 6.07 \\
water-filling + Bennett estimate (shipped) & 16.33 $\pm$ 0.55 & 7.39 $\pm$ 0.13 & 8.14 $\pm$ 0.07 \\
exact DP + Bennett estimate & 17.55 $\pm$ 1.49 & 7.17 $\pm$ 0.19 & 7.98 $\pm$ 0.06 \\
exact DP + measured estimate & \textbf{8.11 $\pm$ 0.02} & \textbf{5.39 $\pm$ 0.01} & \textbf{7.43 $\pm$ 0.01} \\
\bottomrule
\end{tabular}
\end{table}

\section{Effective bpd: per-method overhead derivations}
\label{app:effective-bpd}

Each method's nominal bpd hides a different amount of
per-token metadata. We add an overhead $\Delta$ to every cell,
counting only storage paid \emph{per (channel, token)};
fixed per-head or per-layer bases are excluded as they
amortize away at $T{=}32768$. Head dimension $d=128$, raw bits
(no entropy coding, applied uniformly to all methods).

\paragraph{KIVI~\citep{kivi}.}
Token-axis grouping with $\text{group\_size}=32$; each
(group, channel) stores an fp16 $(\min,\max)$ pair:
$\Delta_{\text{KIVI}} = 32/32 = 1.0$~bpd. (the
configuration used in standard reproductions);
KIVI's $128$-token fp16 residual contributes $\leq 0.07$~bpd
at $T{=}32{,}768$ (negligible at our sequence length).

\paragraph{KVQuant~\citep{kvquant}.}
Per-channel quantization plus top-$\rho{=}1\%$ fp16 outliers,
following KVQuant's own convention (excluding CSR index overhead):
$\Delta_{\text{KVQuant}} = \rho\cdot 16 = 0.16$~bpd. Sink tokens
add $80/T$~bpd, negligible at $T\geq 1000$.

\paragraph{GEAR~\citep{gear}.}
Group-wise quantization ($\text{group\_size}=64$) plus
rank-$r{=}4$ residual $Q_mP^\top$ with $Q_m\in\RR^{T\times r}$:
group meta $32/64 = 0.5$, residual row $r\cdot 16/d = 0.5$,
basis $P$ assumed shared across tokens (FlexGen-style
convention; per-token $P$ would double the residual cost).
$\Delta_{\text{GEAR}} = 1.0$~bpd. (excluding the sparse
outlier component, which adds further overhead).

\paragraph{TurboQuant~\citep{turboquant}.}
Random rotation + Lloyd--Max; metadata is a per-channel
scale (fixed per-head). $\Delta_{\text{TurboQuant}} = 0$.

\paragraph{SVDq~\citep{svdq}, KQ-SVD~\citep{kqsvd}, KV-COBRA (ours).}
All three store a fixed SVD basis (per-layer for SVDq, per-head
otherwise) and cache only the quantized latent of $r$ channels
at $b$ bits. The per-(channel, token) cost is exactly the
nominal $r\cdot b/d$; $\Delta = 0$ for all three.

\paragraph{Summary.}
KIVI and GEAR each pay $+1.0$~bpd on top of their nominal $b$;
KVQuant pays $+0.16$; rotation-based methods pay $0$. A KIVI or
GEAR cell at nominal $2$~bpd sits at the same effective
operating point as our $3$~bpd.

\section{Scope: \texorpdfstring{$V$}{V}-side compression is structurally limited}
\label{app:vside}

\paragraph{Flatter spectra.} Power-law fits
$\lambda_i\propto i^{-\alpha}$ give median
$\alpha_V/\alpha_K\approx 0.46$ across the three models
(Fig.~\ref{fig:kv-spectrum}, top row). The flatter $V$
spectrum pins C1 at the ceiling $r^\star{=}d$ on essentially
every head at $2$~bpd (bottom row), while on $K$ the optimum
is spread far below $d$ so C1 has room to trade rank for bit
width.

\paragraph{C2 and KL reordering add nothing.} With
per-head spectra nearly identical, each head's
$D^\star(B)$ curve is nearly the same, so C2 has no free
energy to redistribute. Empirically, the KL reordering
provides no measurable gain on $V$, consistent with the
flat-spectrum behavior.

\begin{figure}[H]
\centering
\figorplaceholder[0.85\linewidth]{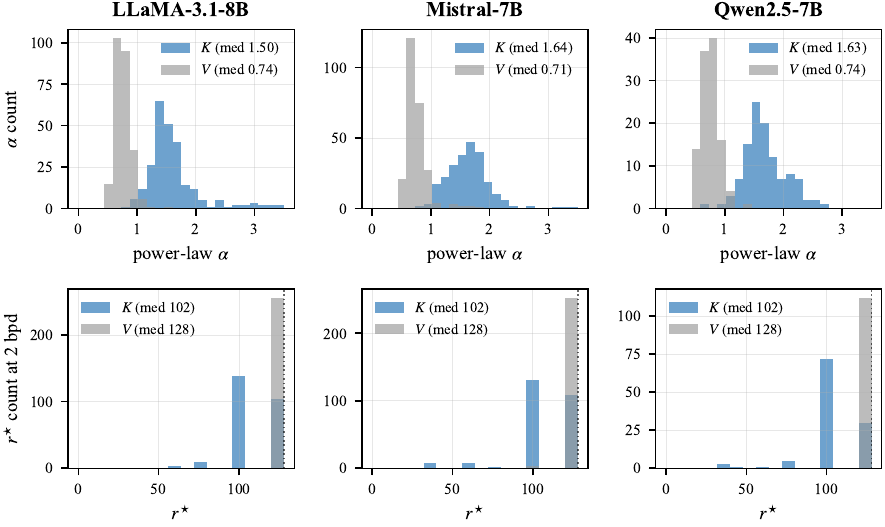}
\caption{$V$ is structurally flatter than $K$.
\textbf{Top:} per-(layer, head) power-law exponents
$\alpha$; median $\alpha_V/\alpha_K\approx 0.46$.
\textbf{Bottom:} C1-selected $r^\star$ at $2$~bpd. On $V$,
$r^\star=d{=}128$ on nearly every head (dotted line); on $K$
the distribution is broad.}
\label{fig:kv-spectrum}
\end{figure}

\paragraph{Empirical check.}
Figure~\ref{fig:kv-vs-v} compares \kvc{} (C1 only)
on $K$-only vs.\ $V$-only at matched bit budgets.
On a log scale the $V$-only curve sits orders of magnitude
above the $K$-only curve through sub-$3$~bpd and converges to
FP16 only around $4$--$5$~bpd: $V$ is harder to compress at
low rates because truncating its flat spectrum is expensive.

\begin{figure}[H]
\centering
\figorplaceholder[0.85\linewidth]{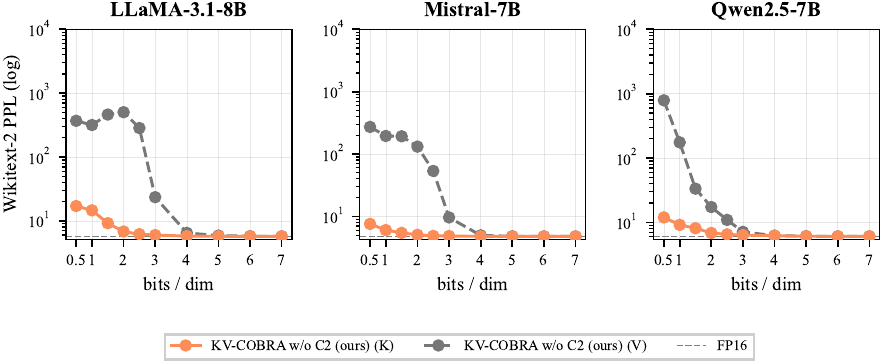}
\caption{$K$-only vs.\ $V$-only Wikitext-2 PPL (log)
for \kvc{} on three models. Orange solid: $K$-only;
grey dashed: $V$-only; dashed horizontal: FP16. $V$ converges
to FP16 only near $4$--$5$~bpd.}
\label{fig:kv-vs-v}
\end{figure}

\paragraph{What we do on $V$.} Putting the three observations
together, the $V$ side reduces to plain bit allocation. The
flat spectrum makes rank truncation lossy for any
non-trivial savings, the per-head $D^\star(B)$ curves are
similar enough that cross-head redistribution buys nothing
(so C2 is redundant), and the KL reordering coincides with
the MSE one. We therefore reuse the same kernel as on $K$
but skip C2, and the resulting pipeline is a plain
rotate-and-quantize on the $V$ projection. The Hadamard
rotation still equalizes per-channel variance and so is kept,
which is consistent with the joint $K{+}V$ table where the
$V$-side bits behave like a uniform quantizer
(Tables~\ref{tab:kv-joint},~\ref{tab:kv-joint-lb}). Because
the $V$ side has no per-head allocation to optimize, we do
not feature $V$-only as a primary evaluation track---the
joint $K{+}V$ sweep is the meaningful test, since it lets
the $K$ side absorb the rank budget that $V$ cannot use.

\section{Post-RoPE experiments}
\label{app:postrope}

We ran every tested method under both pre-RoPE and post-RoPE
compression with the same calibration protocol. Post-RoPE requires
a new harness because the attention forward has to be monkey-
patched to inject the compressor after the rotary embedding is
applied to $K$. Pre-RoPE remains uniformly better
(Figure~\ref{fig:postrope}) because the per-head covariance is
position-stationary pre-RoPE and position-dependent post-RoPE: a
single static SVD basis captures the structure in the first case
but not the second.

The mechanism is worth spelling out. After RoPE, each position
stores a differently rotated copy of the key distribution, so
a static basis faces the average over positions,
$\EE_{p}\!\left[R(p)\,\Sigma\,R(p)^{\top}\right]$, which is
much closer to isotropic than $\Sigma$ itself, with the
high-frequency rotation planes flattening first. The low-rank
structure the method relies on fades, so any static basis pays
more truncation loss at the same rank. This matches the
per-channel structure mixing reported by
KVQuant~\citep{kvquant}. Position-dependent bases could track
the rotation in principle, at the cost of the one-shot
calibration that keeps the method simple. We leave this as
future work.

\begin{figure}[ht!]
\centering
\figorplaceholder[0.85\linewidth]{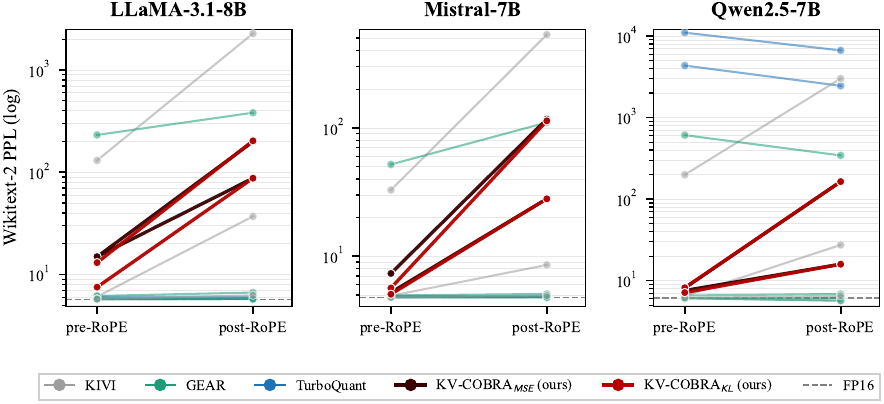}
\caption{Pre-RoPE vs.\ post-RoPE PPL at the canonical bit rates,
per baseline. Pre-RoPE is uniformly better: the per-head
covariance is position-stationary and admits a good shared basis.}
\label{fig:postrope}
\end{figure}

\section{Experimental setup in detail}
\label{app:setup}

This appendix expands Section~\ref{sec:setup} with the details
omitted there for space.

\paragraph{Models.} LLaMA-3.1-8B~\citep{llama3};
Mistral-7B-v0.3~\citep{mistral}; Qwen2.5-7B-Instruct~\citep{qwen25}.
All three models use grouped-query attention and are loaded
in \texttt{fp16} with their default tokenizers, providing
three independently trained $7$--$8$B backbones from
different labs.

\paragraph{Calibration.} Each calibration run passes $32$ text
samples of $1{,}024$ tokens each from the Wikitext-2~\citep{wikitext}
training split through a single forward pass with hooks on every
$k_{\mathrm{proj}}$ and $v_{\mathrm{proj}}$ output. The per-head
activations are centered and eigendecomposed in
\texttt{float32}. Calibration is deterministic given the seed
and the tokenized data; no gradient steps are taken at any
point.

\paragraph{PPL evaluation.} Perplexity is measured on
Wikitext-2~\citep{wikitext}, the Penn Treebank (PTB)~\citep{ptb}, and the C4
English subset~\citep{c4}, each at $32{,}768$ evaluation tokens using
a sliding window of length $2{,}048$ and stride $512$. These
settings match the ones used by the KIVI~\citep{kivi} and KVQuant~\citep{kvquant}
evaluation scripts.

\paragraph{LongBench and zero-shot.} We report mean F1 on the
LongBench~\citep{longbench} English subset at $50$ samples
per task on NarrativeQA~\citep{narrativeqa},
QASPER~\citep{qasper}, MultiFieldQA-en (introduced
by~\citep{longbench}), HotpotQA~\citep{hotpotqa}, and
MuSiQue~\citep{musique}. Zero-shot accuracy is reported on
ARC-Challenge~\citep{arc}, HellaSwag~\citep{hellaswag},
PIQA~\citep{piqa}, WinoGrande~\citep{winogrande}, and
MMLU~\citep{mmlu} at $200$ samples per task. All scores in
the paper are per-task means.

\paragraph{Baselines and bit-width support.} Every baseline
reported in the paper is ported directly from the reference
implementation provided by the original authors, with only
enough glue code to share our \texttt{KVCompressor} dispatch.
TurboQuant requires precomputed Lloyd--Max codebooks per
bit-width and is restricted to $\{3, 4\}$~bpd as in the
official repository. KIVI and KVQuant use uniform integer
quantization with per-(group, channel) statistics that is
well-defined at any integer bit-width; we evaluate them at
$\{1, 2, 3, 4\}$~bpd in our sweep, while their original
papers focus on $\{2, 4\}$~bpd (KIVI) and $\{2, 3, 4\}$~bpd
(KVQuant).

\paragraph{Bit-rate sweep.} Integer-only baselines run at
the bit-widths supported by their official implementations
(see above). \kvmse{} and \kvkl{} run at $\{0.5, 1\}$~bpd
in $k$-only mode for the multi-seed ablation
(Table~\ref{tab:main-ablation}); the bit-rate scan in
Fig.~\ref{fig:main-results} additionally covers
$\{1.5, 2, 2.5, 3, 4\}$~bpd. The $K{+}V$ joint sweep runs at
the $(K_{\text{bpd}}, V_{\text{bpd}})$ splits of
Tables~\ref{tab:kv-joint} and~\ref{tab:kv-joint-lb}. We do
not run $K{+}V$ symmetric cells because the decomposition
treats the two sides independently.

\paragraph{Seeds and variance.} The pipeline (calibration,
compression, evaluation) is deterministic given
\texttt{torch}, \texttt{numpy}, and the Walsh--Hadamard sign
seed. Single-seed runs use a fixed seed.
Table~\ref{tab:main-ablation} and Fig.~\ref{fig:main-results}
report mean\,$\pm$\,1~SD over $5$ seeds across all three models. Across these
multi-seed cells the median relative SD is $0.51\%$ and the
maximum is $9.06\%$ (at $0.5$~bpd).

\section{Additional empirical results}
\label{app:addl-results}

\paragraph{Per-dataset / per-task breakdown at $0.5$ and $1.0$~bpd.}
Tables~\ref{tab:detail-half} and \ref{tab:detail-one} expand the body-level summary
(Table~\ref{tab:main-ablation}) into per-dataset PPL and
per-task zero-shot / LongBench breakdowns, highlighting where
each objective wins. \kvkl{} takes most zero-shot tasks on
LLaMA, and dominates LongBench especially on Qasper,
MultiFieldQA, and NarrativeQA on Mistral. \kvmse{} retains PPL
advantage on LLaMA Wikitext-2 and on most Qwen metrics.

\paragraph{Wall-clock at the primitive level.}
KV-COBRA stores the cache as packed $b$-bit codes plus a
fold-in of the Hadamard rotation into $V_r$, so the per-head
attention primitive at decode dequantizes and applies the
inverse rotation in a single fused pass. Table~\ref{tab:A4}
reports the speedup of this primitive over FP16-SDPA at the
same shapes ($H_q=32$, $d=128$), under CUDA graphs on an
RTX 6000 Ada. At the retained-rank choice used in the paper
($r=16$), KV-COBRA is $1.79$--$1.89\times$ faster than FP16
across $T\in\{32{\rm k},64{\rm k},128{\rm k}\}$; the speedup
holds up to $r=64$ and only inverts at $r=128$ where the
quantized path's BMM cost overwhelms the bandwidth savings.

\begin{table}[tpb]
\centering
\caption{Per-dataset / per-task breakdown at $0.5$~bpd ($k$-only).
PPL ($\downarrow$), zero-shot (\%, $\uparrow$), LongBench F1 ($\uparrow$),
all aggregate $5$ seeds; cells report mean{\tiny$\,\pm\,$std}.}
\label{tab:detail-half}
\footnotesize
\setlength{\tabcolsep}{2.5pt}
\begin{tabular}{l cccccc}
\toprule
 & \multicolumn{2}{c}{LLaMA} & \multicolumn{2}{c}{Mistral} & \multicolumn{2}{c}{Qwen} \\
\cmidrule(lr){2-3}\cmidrule(lr){4-5}\cmidrule(lr){6-7}
Metric & MSE & KL & MSE & KL & MSE & KL \\
\midrule
\multicolumn{7}{l}{\emph{Perplexity} ($\downarrow$)} \\
\quad Wikitext-2 & \textbf{16.29{\tiny$\,\pm\,$0.94}} & 17.60{\tiny$\,\pm\,$0.72} & 9.05{\tiny$\,\pm\,$0.31} & \textbf{8.08{\tiny$\,\pm\,$0.07}} & \textbf{10.06{\tiny$\,\pm\,$0.12}} & 10.37{\tiny$\,\pm\,$0.13} \\
\quad PTB & \textbf{30.75{\tiny$\,\pm\,$3.11}} & 34.74{\tiny$\,\pm\,$1.06} & \textbf{81.88{\tiny$\,\pm\,$4.66}} & 86.20{\tiny$\,\pm\,$2.10} & \textbf{19.50{\tiny$\,\pm\,$0.27}} & 20.37{\tiny$\,\pm\,$0.34} \\
\quad C4 & \textbf{46.46{\tiny$\,\pm\,$4.10}} & 59.60{\tiny$\,\pm\,$19.14} & 18.57{\tiny$\,\pm\,$1.20} & \textbf{14.44{\tiny$\,\pm\,$0.20}} & \textbf{19.45{\tiny$\,\pm\,$0.16}} & 20.76{\tiny$\,\pm\,$0.30} \\
\quad \textit{mean} & \textbf{31.17{\tiny$\,\pm\,$2.21}} & 37.31{\tiny$\,\pm\,$6.44} & 36.50{\tiny$\,\pm\,$1.46} & \textbf{36.24{\tiny$\,\pm\,$0.68}} & \textbf{16.33{\tiny$\,\pm\,$0.15}} & 17.17{\tiny$\,\pm\,$0.23} \\
\midrule
\multicolumn{7}{l}{\emph{Zero-shot accuracy} (\%, $\uparrow$)} \\
\quad ARC-C & \textbf{32.0{\tiny$\,\pm\,$1.4}} & 30.9{\tiny$\,\pm\,$1.0} & \textbf{36.1{\tiny$\,\pm\,$1.2}} & 35.2{\tiny$\,\pm\,$1.1} & 33.0{\tiny$\,\pm\,$1.6} & \textbf{35.5{\tiny$\,\pm\,$1.4}} \\
\quad HellaSwag & \textbf{46.9{\tiny$\,\pm\,$0.8}} & 46.6{\tiny$\,\pm\,$0.7} & \textbf{49.7{\tiny$\,\pm\,$0.4}} & 49.7{\tiny$\,\pm\,$0.5} & \textbf{48.2{\tiny$\,\pm\,$0.9}} & 48.1{\tiny$\,\pm\,$0.9} \\
\quad PIQA & \textbf{74.3{\tiny$\,\pm\,$1.4}} & 73.3{\tiny$\,\pm\,$1.4} & \textbf{76.3{\tiny$\,\pm\,$1.1}} & 76.2{\tiny$\,\pm\,$0.6} & \textbf{73.3{\tiny$\,\pm\,$1.7}} & 72.7{\tiny$\,\pm\,$1.9} \\
\quad WinoGrande & 61.1{\tiny$\,\pm\,$3.2} & \textbf{61.7{\tiny$\,\pm\,$0.9}} & \textbf{69.0{\tiny$\,\pm\,$2.7}} & 66.9{\tiny$\,\pm\,$1.6} & \textbf{61.6{\tiny$\,\pm\,$1.9}} & 58.2{\tiny$\,\pm\,$2.8} \\
\quad MMLU & 29.5{\tiny$\,\pm\,$1.8} & \textbf{30.2{\tiny$\,\pm\,$1.8}} & \textbf{35.9{\tiny$\,\pm\,$1.0}} & 34.5{\tiny$\,\pm\,$1.0} & 27.5{\tiny$\,\pm\,$1.1} & \textbf{28.1{\tiny$\,\pm\,$1.0}} \\
\quad \textit{mean} & \textbf{48.8{\tiny$\,\pm\,$1.0}} & 48.5{\tiny$\,\pm\,$0.5} & \textbf{53.4{\tiny$\,\pm\,$0.9}} & 52.5{\tiny$\,\pm\,$0.3} & \textbf{48.7{\tiny$\,\pm\,$0.9}} & 48.5{\tiny$\,\pm\,$0.7} \\
\midrule
\multicolumn{7}{l}{\emph{LongBench F1} ($\uparrow$)} \\
\quad NarrativeQA & \textbf{8.16{\tiny$\,\pm\,$1.31}} & 7.99{\tiny$\,\pm\,$1.16} & 4.25{\tiny$\,\pm\,$0.75} & \textbf{6.12{\tiny$\,\pm\,$1.22}} & 6.99{\tiny$\,\pm\,$0.41} & \textbf{7.36{\tiny$\,\pm\,$0.43}} \\
\quad Qasper & 9.61{\tiny$\,\pm\,$0.54} & \textbf{9.69{\tiny$\,\pm\,$1.27}} & 10.77{\tiny$\,\pm\,$0.51} & \textbf{11.17{\tiny$\,\pm\,$1.25}} & 9.61{\tiny$\,\pm\,$0.46} & \textbf{11.27{\tiny$\,\pm\,$0.91}} \\
\quad MultiFieldQA & \textbf{11.51{\tiny$\,\pm\,$1.25}} & 11.44{\tiny$\,\pm\,$0.58} & 14.38{\tiny$\,\pm\,$2.32} & \textbf{18.37{\tiny$\,\pm\,$1.03}} & 15.94{\tiny$\,\pm\,$0.49} & \textbf{16.09{\tiny$\,\pm\,$0.42}} \\
\quad HotpotQA & \textbf{3.27{\tiny$\,\pm\,$1.19}} & 2.76{\tiny$\,\pm\,$0.44} & 3.67{\tiny$\,\pm\,$1.40} & \textbf{5.02{\tiny$\,\pm\,$1.40}} & 3.88{\tiny$\,\pm\,$0.83} & \textbf{4.72{\tiny$\,\pm\,$0.74}} \\
\quad MuSiQue & 1.61{\tiny$\,\pm\,$0.47} & \textbf{1.88{\tiny$\,\pm\,$0.39}} & 2.02{\tiny$\,\pm\,$0.46} & \textbf{2.98{\tiny$\,\pm\,$0.53}} & 2.01{\tiny$\,\pm\,$0.54} & \textbf{2.06{\tiny$\,\pm\,$0.19}} \\
\quad \textit{mean} & \textbf{6.83{\tiny$\,\pm\,$0.63}} & 6.75{\tiny$\,\pm\,$0.58} & 7.02{\tiny$\,\pm\,$0.73} & \textbf{8.73{\tiny$\,\pm\,$0.42}} & 7.69{\tiny$\,\pm\,$0.40} & \textbf{8.30{\tiny$\,\pm\,$0.31}} \\
\bottomrule
\end{tabular}
\end{table}

\begin{table}[tpb]
\centering
\caption{Per-dataset / per-task breakdown at $1.0$~bpd ($k$-only).
PPL ($\downarrow$), zero-shot (\%, $\uparrow$), LongBench F1 ($\uparrow$),
all aggregate $5$ seeds; cells report mean{\tiny$\,\pm\,$std}.}
\label{tab:detail-one}
\footnotesize
\setlength{\tabcolsep}{2.5pt}
\begin{tabular}{l cccccc}
\toprule
 & \multicolumn{2}{c}{LLaMA} & \multicolumn{2}{c}{Mistral} & \multicolumn{2}{c}{Qwen} \\
\cmidrule(lr){2-3}\cmidrule(lr){4-5}\cmidrule(lr){6-7}
Metric & MSE & KL & MSE & KL & MSE & KL \\
\midrule
\multicolumn{7}{l}{\emph{Perplexity} ($\downarrow$)} \\
\quad Wikitext-2 & 16.45{\tiny$\,\pm\,$0.51} & \textbf{13.27{\tiny$\,\pm\,$0.60}} & 7.47{\tiny$\,\pm\,$0.15} & \textbf{5.65{\tiny$\,\pm\,$0.03}} & \textbf{8.14{\tiny$\,\pm\,$0.05}} & 8.17{\tiny$\,\pm\,$0.03} \\
\quad PTB & 25.01{\tiny$\,\pm\,$1.68} & \textbf{21.12{\tiny$\,\pm\,$1.17}} & 59.38{\tiny$\,\pm\,$2.65} & \textbf{32.86{\tiny$\,\pm\,$0.33}} & \textbf{14.79{\tiny$\,\pm\,$0.15}} & 15.07{\tiny$\,\pm\,$0.12} \\
\quad C4 & 32.83{\tiny$\,\pm\,$1.27} & \textbf{26.39{\tiny$\,\pm\,$1.08}} & 14.70{\tiny$\,\pm\,$0.61} & \textbf{9.18{\tiny$\,\pm\,$0.07}} & \textbf{15.99{\tiny$\,\pm\,$0.15}} & 16.35{\tiny$\,\pm\,$0.10} \\
\quad \textit{mean} & 24.76{\tiny$\,\pm\,$1.14} & \textbf{20.26{\tiny$\,\pm\,$0.87}} & 27.18{\tiny$\,\pm\,$0.97} & \textbf{15.90{\tiny$\,\pm\,$0.13}} & \textbf{12.97{\tiny$\,\pm\,$0.09}} & 13.20{\tiny$\,\pm\,$0.06} \\
\midrule
\multicolumn{7}{l}{\emph{Zero-shot accuracy} (\%, $\uparrow$)} \\
\quad ARC-C & 34.8{\tiny$\,\pm\,$1.2} & \textbf{34.9{\tiny$\,\pm\,$1.2}} & 39.1{\tiny$\,\pm\,$1.6} & \textbf{40.1{\tiny$\,\pm\,$2.0}} & 38.1{\tiny$\,\pm\,$1.2} & \textbf{38.2{\tiny$\,\pm\,$2.0}} \\
\quad HellaSwag & \textbf{46.8{\tiny$\,\pm\,$0.7}} & 46.6{\tiny$\,\pm\,$0.8} & 50.1{\tiny$\,\pm\,$0.5} & \textbf{50.9{\tiny$\,\pm\,$0.6}} & 50.8{\tiny$\,\pm\,$1.1} & \textbf{51.6{\tiny$\,\pm\,$0.7}} \\
\quad PIQA & 75.9{\tiny$\,\pm\,$0.7} & \textbf{76.1{\tiny$\,\pm\,$1.4}} & 78.1{\tiny$\,\pm\,$1.3} & \textbf{78.3{\tiny$\,\pm\,$0.8}} & 77.6{\tiny$\,\pm\,$1.2} & \textbf{77.7{\tiny$\,\pm\,$0.9}} \\
\quad WinoGrande & 64.6{\tiny$\,\pm\,$2.0} & \textbf{67.2{\tiny$\,\pm\,$2.2}} & \textbf{71.0{\tiny$\,\pm\,$1.5}} & 71.0{\tiny$\,\pm\,$1.7} & 64.3{\tiny$\,\pm\,$2.8} & \textbf{66.2{\tiny$\,\pm\,$1.7}} \\
\quad MMLU & \textbf{35.4{\tiny$\,\pm\,$1.1}} & 33.1{\tiny$\,\pm\,$1.2} & 34.4{\tiny$\,\pm\,$1.2} & \textbf{35.2{\tiny$\,\pm\,$0.5}} & 27.5{\tiny$\,\pm\,$1.3} & \textbf{28.8{\tiny$\,\pm\,$1.7}} \\
\quad \textit{mean} & 51.5{\tiny$\,\pm\,$0.6} & \textbf{51.6{\tiny$\,\pm\,$0.7}} & 54.6{\tiny$\,\pm\,$0.5} & \textbf{55.1{\tiny$\,\pm\,$0.5}} & 51.7{\tiny$\,\pm\,$0.7} & \textbf{52.5{\tiny$\,\pm\,$0.7}} \\
\midrule
\multicolumn{7}{l}{\emph{LongBench F1} ($\uparrow$)} \\
\quad NarrativeQA & 9.38{\tiny$\,\pm\,$0.40} & \textbf{11.21{\tiny$\,\pm\,$0.39}} & 6.37{\tiny$\,\pm\,$1.22} & \textbf{8.34{\tiny$\,\pm\,$1.16}} & 7.46{\tiny$\,\pm\,$0.36} & \textbf{7.65{\tiny$\,\pm\,$0.34}} \\
\quad Qasper & 9.60{\tiny$\,\pm\,$1.25} & \textbf{11.12{\tiny$\,\pm\,$0.78}} & 11.57{\tiny$\,\pm\,$1.47} & \textbf{12.57{\tiny$\,\pm\,$1.37}} & 12.02{\tiny$\,\pm\,$0.71} & \textbf{13.25{\tiny$\,\pm\,$0.64}} \\
\quad MultiFieldQA & 18.70{\tiny$\,\pm\,$1.24} & \textbf{22.28{\tiny$\,\pm\,$1.85}} & 20.45{\tiny$\,\pm\,$1.23} & \textbf{21.73{\tiny$\,\pm\,$1.06}} & 18.70{\tiny$\,\pm\,$0.59} & \textbf{19.24{\tiny$\,\pm\,$0.82}} \\
\quad HotpotQA & 5.56{\tiny$\,\pm\,$1.39} & \textbf{8.36{\tiny$\,\pm\,$2.03}} & 5.82{\tiny$\,\pm\,$0.80} & \textbf{7.18{\tiny$\,\pm\,$0.33}} & 4.88{\tiny$\,\pm\,$0.35} & \textbf{5.10{\tiny$\,\pm\,$0.54}} \\
\quad MuSiQue & 3.04{\tiny$\,\pm\,$0.41} & \textbf{3.83{\tiny$\,\pm\,$0.43}} & 3.53{\tiny$\,\pm\,$0.32} & \textbf{3.77{\tiny$\,\pm\,$0.59}} & \textbf{2.45{\tiny$\,\pm\,$0.17}} & 2.40{\tiny$\,\pm\,$0.17} \\
\quad \textit{mean} & 9.25{\tiny$\,\pm\,$0.54} & \textbf{11.36{\tiny$\,\pm\,$0.77}} & 9.55{\tiny$\,\pm\,$0.56} & \textbf{10.72{\tiny$\,\pm\,$0.49}} & 9.10{\tiny$\,\pm\,$0.29} & \textbf{9.53{\tiny$\,\pm\,$0.30}} \\
\bottomrule
\end{tabular}
\end{table}

\begin{table}[tpb]
\centering
\footnotesize
\caption{Per-layer attention speedup at $H_q=32$, $d=128$, CUDA-graph captured. Each cell is FP16-SDPA latency / KV-COBRA latency at the same $T$ (i.e. $>1\times$ means KV-COBRA is faster); \textbf{bold} marks $>1\times$.}
\label{tab:A4}
\begin{tabular}{c cc cc cc cc}
\toprule
 & \multicolumn{2}{c}{$r=16$} & \multicolumn{2}{c}{$r=32$} & \multicolumn{2}{c}{$r=64$} & \multicolumn{2}{c}{$r=128$} \\
\cmidrule(lr){2-3} \cmidrule(lr){4-5} \cmidrule(lr){6-7} \cmidrule(lr){8-9}
$T$ & $b=2$ & $b=4$ & $b=2$ & $b=4$ & $b=2$ & $b=4$ & $b=2$ & $b=4$ \\
\midrule
32k  & \textbf{1.83$\times$} & \textbf{1.84$\times$} & \textbf{1.77$\times$} & \textbf{1.73$\times$} & \textbf{1.40$\times$} & \textbf{1.31$\times$} & 0.44$\times$ & 0.32$\times$ \\
64k  & \textbf{1.79$\times$} & \textbf{1.85$\times$} & \textbf{1.82$\times$} & \textbf{1.74$\times$} & \textbf{1.48$\times$} & \textbf{1.40$\times$} & 0.43$\times$ & 0.36$\times$ \\
128k & \textbf{1.89$\times$} & \textbf{1.88$\times$} & \textbf{1.85$\times$} & \textbf{1.76$\times$} & \textbf{1.49$\times$} & \textbf{1.38$\times$} & 0.40$\times$ & 0.33$\times$ \\
\bottomrule
\end{tabular}
\end{table}

\paragraph{Task-dependence of the K/V asymmetry.}
Table~\ref{tab:kv-pertask} lists the best
$(K_{\text{bpd}}, V_{\text{bpd}})$ split of \kvkl{} for each
LongBench task at total budgets $5/6/7$~bpd, per model. The
V-heavy preference holds at the level of individual tasks, not
only the suite mean. The per-task best split is V-heavy in
$40$ of $45$ (model, task, budget) cells, and all five
exceptions are QASPER or MultiFieldQA, the most
extraction-flavored tasks, where the best split drifts from
V-heavy through balanced to K-heavy as the budget grows. The
pattern matches the two error paths. $K$ distortion perturbs
\emph{addressing} (logit noise through the softmax, most
damaging when attention must be peaked), while $V$ distortion
perturbs \emph{content} (entering the output linearly and
degrading smoothly). At $V_{\text{bpd}}{=}2$ the flat $V$
spectrum leaves no low-error directions to drop, which is why
the collapse is abrupt.

\begin{table}[tpb]
\centering
\footnotesize
\caption{Best $(K_{\text{bpd}}, V_{\text{bpd}})$ split of
\kvkl{} per LongBench task at total budgets $5$\,/\,$6$\,/\,$7$~bpd.
\textbf{Bold} marks the five non-V-heavy cells.}
\label{tab:kv-pertask}
\setlength{\tabcolsep}{3pt}
\begin{tabular}{l lll}
\toprule
Task & LLaMA-3.1-8B & Mistral-7B & Qwen2.5-7B \\
\midrule
NarrativeQA & (1,4) / (1,5) / (1,6) & (1,4) / (2,4) / (3,4) & (1,4) / (2,4) / (3,4) \\
QASPER & (1,4) / (2,4) / (3,4) & (1,4) / \textbf{(3,3)} / \textbf{(4,3)} & (2,3) / \textbf{(3,3)} / \textbf{(4,3)} \\
MultiFieldQA & (1,4) / (2,4) / (1,6) & (1,4) / (2,4) / (3,4) & (2,3) / \textbf{(3,3)} / (3,4) \\
HotpotQA & (1,4) / (1,5) / (2,5) & (1,4) / (2,4) / (2,5) & (1,4) / (1,5) / (2,5) \\
MuSiQue & (2,3) / (2,4) / (2,5) & (1,4) / (2,4) / (3,4) & (2,3) / (2,4) / (3,4) \\
\bottomrule
\end{tabular}
\end{table}

\paragraph{Layer-position sensitivity.}
C1/C2 weight every head purely by the magnitude of its
spectral distortion. A natural question is whether equal
distortion is equally harmful at every depth. To test this we
apply the full $1$-bpd \kvkl{} allocation to one layer
quartile at a time (all other layers FP16) and normalize each
quartile's Wikitext-2 PPL increase by its share of the
distortion surrogate at the deployed $(r, b)$ (mean over three
calibration seeds). Depth does matter. Per unit of surrogate
distortion, the most sensitive quartile does
$2.5$--$5.9\times$ more damage than the least sensitive one,
and the profile is model-dependent (LLaMA is most sensitive in
the first quartile, Mistral in the early half, Qwen in the
mid-to-late layers). Errors also compound across depth.
Compressing all layers at once costs more than the sum of the
per-quartile costs ($+7.72$ vs.\ $\Sigma{=}4.03$ PPL on
LLaMA). The current objective captures part of the depth
effect through each layer's spectrum but does not reweight
across layers. Folding a depth weight measured at calibration
time into C2 is a natural extension.

\paragraph{Calibration robustness.}
We recomputed the full C1{+}C2 solution under five alternative
calibrations and compared each against the paper's $n{=}32$
WikiText-2 draw (Table~\ref{tab:calib-alloc}). Per-head
budgets, chosen bit widths, and downstream scores all stay in
a narrow band, and changing the calibration size perturbs the
allocation about as much as re-drawing a same-size set.
Calibrating on Python source instead of WikiText-2 moves
HumanEval pass@1 by at most $1.6$ points, so the allocation
also transfers across domains.

\begin{table}[tpb]
\centering
\footnotesize
\caption{Calibration robustness vs.\ the paper's $n{=}32$
WikiText-2 calibration. Allocation columns give ranges over
the three models and $\bpd\in\{0.5,1,2\}$. LongBench is the
five-task F1 mean of \kvkl{} at $1$~bpd ($50$ samples per
task).}
\label{tab:calib-alloc}
\setlength{\tabcolsep}{4pt}
\begin{tabular}{l ccc ccc}
\toprule
 & \multicolumn{3}{c}{Allocation vs.\ paper} & \multicolumn{3}{c}{LongBench F1} \\
\cmidrule(lr){2-4}\cmidrule(lr){5-7}
Calibration & budget corr. & identical $b^\star$ & mean $|\Delta r^\star|$ & LLaMA & Mistral & Qwen \\
\midrule
(reference, $n{=}32$ WikiText-2) & --- & --- & --- & 10.8 & 10.6 & 9.4 \\
Disjoint WikiText-2 draw & 0.992--1.000 & 97--100\,\% & 0.45--1.17 & 9.6 & 10.0 & 9.4 \\
PTB & 0.984--0.998 & 94--100\,\% & 0.76--3.05 & 12.1 & 10.5 & 8.9 \\
C4 & 0.990--0.999 & 94--100\,\% & 0.42--2.30 & 9.0 & 9.8 & 9.4 \\
$n=8$ ($4\times$ less data) & 0.991--0.999 & 91--100\,\% & 0.30--3.31 & 10.6 & 10.1 & 9.2 \\
$n=128$ ($4\times$ more data) & 0.993--0.999 & 96--100\,\% & 0.37--1.79 & 10.3 & 9.8 & 9.4 \\
\bottomrule
\end{tabular}
\end{table}

\paragraph{Scale and family check.}
Table~\ref{tab:scale-family} repeats the three-metric
comparison at $1$~bpd ($k$-only) on LLaMA-3.1-70B,
Qwen2.5-72B, and Mixtral-8x7B (a mixture-of-experts model
whose attention and KV structure match Mistral-7B, with MoE
only in the feed-forward blocks). The pattern from the
$7$--$8$B models carries over. \kvkl{} stays closest to FP16
on every metric in all three families, while a different
integer-grid baseline collapses on perplexity in each.
One-time calibration at 70B takes $18.3$~s on four $48$~GB
GPUs ($14.6$~s calibration forward passes, $2.2$~s
eigendecomposition of all $640$ KV heads, $0.4$~s C1{+}C2
solve with Hadamard fold-in, $1.1$~s KL reordering), so
per-head SVD is not a bottleneck at scale.

\begin{table}[tpb]
\centering
\footnotesize
\setlength{\tabcolsep}{4pt}
\caption{Scale and family check at $1$~bpd ($k$-only). PPL
($\downarrow$) is the mean over Wikitext-2/PTB/C4. Zero-shot
($\uparrow$) is the mean over ARC-C, HellaSwag, PIQA,
WinoGrande, and MMLU ($200$ samples each, with
mean$\,\pm\,$SD over $3$ seeds for KV-COBRA and seed-invariant
single runs otherwise). LongBench ($\uparrow$) is the
five-task F1 mean ($25$ samples per task).
Shaded cell marks the best compressed method per column.}
\label{tab:scale-family}
\begin{tabular}{ll ccc}
\toprule
Model & Method & PPL & Zero-shot & LongBench \\
\midrule
\multirow{7}{*}{\textit{LLaMA-3.1-70B}}
& FP16 & 5.38 & 60.4 & 14.3 \\
& \kvkl{} & \cellcolor{black!12}6.97 & 58.4 $\pm$ 0.6 & \cellcolor{black!12}13.5 \\
& \kvmse{} & 9.50 & \cellcolor{black!12}58.5 $\pm$ 0.9 & 10.5 \\
& KQ-SVD & 9.79 & 57.0 & 11.2 \\
& KVQuant & $>\!10^{3}$ & 55.9 & 5.1 \\
& KIVI & 163.8 & 52.6 & 3.7 \\
& GEAR & 245.1 & 51.4 & 5.6 \\
\midrule
\multirow{7}{*}{\textit{Qwen2.5-72B}}
& FP16 & 6.27 & 58.0 & 16.3 \\
& \kvkl{} & \cellcolor{black!12}7.62 & 56.7 $\pm$ 0.7 & \cellcolor{black!12}16.8 \\
& \kvmse{} & 7.97 & \cellcolor{black!12}57.1 $\pm$ 0.5 & 14.2 \\
& KQ-SVD & 7.69 & 54.9 & 13.6 \\
& KVQuant & 910.1 & 54.4 & 4.1 \\
& KIVI & 17.8 & 52.2 & 10.0 \\
& GEAR & 50.7 & 50.9 & 8.1 \\
\midrule
\multirow{7}{*}{\textit{Mixtral-8x7B}}
& FP16 & 7.95 & 53.4 & 11.3 \\
& \kvkl{} & \cellcolor{black!12}13.19 & \cellcolor{black!12}52.3 $\pm$ 0.8 & \cellcolor{black!12}11.3 \\
& \kvmse{} & 13.36 & 51.8 $\pm$ 0.1 & 10.0 \\
& KQ-SVD & 27.02 & 52.2 & 7.7 \\
& KVQuant & 323.5 & 49.1 & 3.6 \\
& KIVI & 58.0 & 44.7 & 6.6 \\
& GEAR & 96.0 & 43.9 & 7.0 \\
\bottomrule
\end{tabular}
\end{table}

\clearpage
\section{Long-context retrieval on RULER}
\label{app:ruler}

This appendix collects the RULER~\citep{ruler} results, mean
accuracy over the $13$ subtasks at $T{=}4096$ ($k$-only).
RULER probes retrieval directly and separates the two
objectives far more than PPL does. The KL reordering's margin
over MSE is largest here (Table~\ref{tab:ruler}), consistent
with the attention-quality framing of Section~\ref{sec:kl}.
At $2$~bpd \kvkl{} reaches $92.5/90.6/90.9$ (single seed),
within $0.7$--$3.6$ points of FP16.
Figure~\ref{fig:ruler-sweep} sweeps the full bit-rate range.

\begin{figure}[H]
\centering
\figorplaceholder[\linewidth]{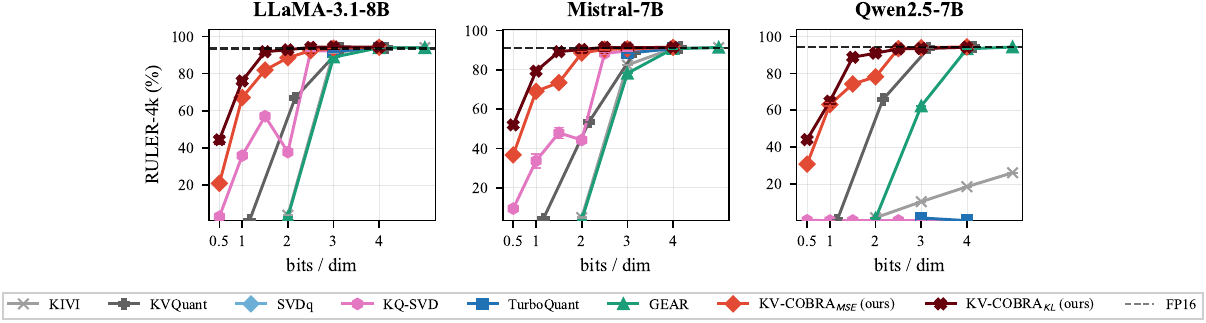}
\caption{RULER-4k mean accuracy versus bit rate ($k$-only).
Dashed lines mark FP16. Error bars are $\pm 1$~SD where
multi-seed data exist.}
\label{fig:ruler-sweep}
\end{figure}

\begin{table}[H]
\centering
\footnotesize
\caption{RULER-4k mean accuracy ($\uparrow$, mean over the
$13$ RULER subtasks at $T{=}4096$, $k$-only,
mean$\,\pm\,$SD over $5$ seeds).}
\label{tab:ruler}
\begin{tabular}{l cc cc cc}
\toprule
 & \multicolumn{2}{c}{LLaMA-3.1-8B} & \multicolumn{2}{c}{Mistral-7B} & \multicolumn{2}{c}{Qwen2.5-7B} \\
\cmidrule(lr){2-3}\cmidrule(lr){4-5}\cmidrule(lr){6-7}
bpd & \kvmse{} & \kvkl{} & \kvmse{} & \kvkl{} & \kvmse{} & \kvkl{} \\
\midrule
$0.5$ & 20.9 $\pm$ 1.1 & \textbf{44.3 $\pm$ 2.3} & 36.7 $\pm$ 1.1 & \textbf{52.0 $\pm$ 0.7} & 30.7 $\pm$ 0.8 & \textbf{44.0 $\pm$ 0.7} \\
$1.0$ & 67.1 $\pm$ 1.7 & \textbf{76.0 $\pm$ 0.7} & 69.1 $\pm$ 1.2 & \textbf{79.3 $\pm$ 1.4} & 63.0 $\pm$ 0.7 & \textbf{64.9 $\pm$ 1.2} \\
\midrule
FP16 & \multicolumn{2}{c}{93.5} & \multicolumn{2}{c}{91.3} & \multicolumn{2}{c}{94.5} \\
\bottomrule
\end{tabular}
\end{table}

\paragraph{Joint K+V splits on RULER.}
Table~\ref{tab:kv-joint-ruler} repeats the joint $K{+}V$
sweep of Table~\ref{tab:kv-joint} on RULER-4k. The $K$/$V$
asymmetry \emph{reverses} relative to PPL and LongBench. The
best cells are balanced to K-heavy, and starving $K$
($K_{\text{bpd}}{=}1$) costs $15$--$30$ points, whereas on
PPL the V-heavy splits dominate. This matches the two error
paths of Section~\ref{sec:discussion}. Retrieval needs peaked
attention on a single token, so the extra bits buy the most
on the addressing side.

\begin{table}[H]
\centering
\caption{K+V joint compression, RULER-4k mean accuracy
($\uparrow$). Same splits as Table~\ref{tab:kv-joint}. Shaded
cell marks the best score within each (model, $T$) block.
KQ-SVD's near-zero Qwen scores mirror its PPL divergence in
Table~\ref{tab:kv-joint}.}
\label{tab:kv-joint-ruler}
\scriptsize
\setlength{\tabcolsep}{1.5pt}
\begin{tabular}{l l ccc cccc ccccc c}
\toprule
& & \multicolumn{3}{c}{$T = 5$} & \multicolumn{4}{c}{$T = 6$}
& \multicolumn{5}{c}{$T = 7$} & \\
\cmidrule(lr){3-5}\cmidrule(lr){6-9}\cmidrule(lr){10-14}
& Method & $(1,4)$ & $(2,3)$ & $(3,2)$
& $(1,5)$ & $(2,4)$ & $(3,3)$ & $(4,2)$
& $(1,6)$ & $(2,5)$ & $(3,4)$ & $(4,3)$ & $(5,2)$ & FP16 \\
\midrule
\multirow{3}{*}{\textit{LLaMA-3.1-8B}}
& \kvmse{} & 67.33 & 87.61 & 92.07 & 68.65 & 88.33 & 93.71 & 93.11 & 68.60 & 88.24 & 93.72 & 93.88 & 93.43 & \multirow{3}{*}{94.11} \\
& \kvkl{}  & 76.53 & 92.35 & \cellcolor{black!12}93.60 & 75.86 & 92.77 & \cellcolor{black!12}94.02 & 93.53 & 76.80 & 92.82 & \cellcolor{black!12}94.31 & 94.24 & 93.50 & \\
& KQ-SVD   & 33.68 & 33.73 & 87.98 & 33.73 & 36.20 & 91.46 & 92.98 & 34.88 & 36.16 & 92.08 & 93.71 & 93.03 & \\
\midrule
\multirow{3}{*}{\textit{Mistral-7B}}
& \kvmse{} & 67.87 & 86.74 & 90.06 & 67.11 & 87.47 & 90.75 & 90.93 & 67.63 & 87.28 & 90.49 & 91.50 & 90.75 & \multirow{3}{*}{91.52} \\
& \kvkl{}  & 79.66 & 88.99 & \cellcolor{black!12}90.34 & 80.40 & 89.46 & \cellcolor{black!12}91.21 & 91.04 & 80.86 & 89.35 & 91.08 & 91.22 & 91.15 & \\
& KQ-SVD   & 35.58 & 42.12 & 86.37 & 35.86 & 42.86 & 89.33 & 90.04 & 36.34 & 43.47 & 89.99 & \cellcolor{black!12}91.53 & 90.42 & \\
\midrule
\multirow{3}{*}{\textit{Qwen2.5-7B}}
& \kvmse{} & 62.32 & 78.01 & \cellcolor{black!12}93.18 & 61.79 & 78.05 & \cellcolor{black!12}93.89 & 93.85 & 62.76 & 78.26 & 93.94 & 94.22 & 93.92 & \multirow{3}{*}{94.50} \\
& \kvkl{}  & 63.31 & 91.41 & 90.79 & 64.79 & 91.09 & 93.26 & 93.16 & 63.95 & 91.28 & 93.66 & \cellcolor{black!12}94.41 & 93.66 & \\
& KQ-SVD   & 0.06 & 0.05 & 0.05 & 0.10 & 0.00 & 0.01 & 0.00 & 0.11 & 0.01 & 0.03 & 0.02 & 1.43 & \\
\bottomrule
\end{tabular}
\end{table}


\end{document}